%% file: main.tex
\documentclass{article}

\usepackage[preprint]{neurips_2026}

\usepackage[utf8]{inputenc} 
\usepackage[T1]{fontenc}    
\usepackage{hyperref}       
\usepackage{url}            
\usepackage{booktabs}       
\usepackage{wrapfig}
\usepackage{caption}
\usepackage{amsthm}
\usepackage{enumitem}
\usepackage{graphicx}
\usepackage{colortbl}
\usepackage{amsmath}
\usepackage{amssymb}
\usepackage{algorithmic}
\usepackage{algorithm}
\usepackage{amsfonts}       
\usepackage{nicefrac}       
\usepackage{microtype}      
\usepackage{xcolor}         
\usepackage{colortbl}
\usepackage{wrapfig}
\usepackage{makecell}

\definecolor{pos5}{HTML}{E8F5E9}
\definecolor{pos10}{HTML}{C8E6C9}
\definecolor{pos30}{HTML}{A5D6A7}
\definecolor{pos60}{HTML}{81C784}
\definecolor{pos100}{HTML}{66BB6A}
\definecolor{neg5}{HTML}{FFEBEE}
\definecolor{neg15}{HTML}{FFCDD2}
\definecolor{neg20}{HTML}{EF9A9A}
\definecolor{neg37}{HTML}{E57373}
\definecolor{basegray}{HTML}{F5F5F5}

\title{Reward Transport: Property Control in Flow Matching
    via Noise-Space Alignment}

\newtheorem{proposition}{Proposition}

\author{%
  Kehan Guo$^{1}$ \quad Yili Shen$^{1}$ \quad Yujun Zhou$^{1}$ \quad Yue Huang$^{1}$ \\[3pt]
  \textbf{Chujie Gao}$^{1}$ \quad \textbf{Shiyi Du}$^{2}$ \quad \textbf{Xiangliang Zhang}$^{1}$ \\[5pt]
  $^{1}$University of Notre Dame \qquad $^{2}$Carnegie Mellon University \\[2pt]
  \texttt{kguo2@nd.edu} \quad \texttt{xzhang33@nd.edu}
}

\begin{document}

\maketitle

\begin{abstract}
The coupling in flow matching---the rule pairing noise vectors with data points---is typically treated as a computational choice. 
We show that this coupling can instead serve as an alignment interface: by matching noise and data according to a target molecular property, it embeds controllable structure directly into the learned flow field. Building on this view, we introduce Reward Transport, which uses optimal transport coupling at training time to align a scalar noise-space coordinate with molecular rewards; at inference, varying this coordinate steers the generated distribution without requiring an oracle, reward model, gradient guidance, or additional computation. 
In the coupling-preserving limit, thresholding this coordinate recovers the Cross-Entropy Method’s truncated reward distribution, providing a principled, continuously adjustable distribution-level control knob. 
Empirically, on ZINC-250K and GuacaMol, sweeping the scalar induces monotone control of logP and consistent QED control over its operating range; most tellingly, the same knob produces opposite structural responses for different targets, growing molecules for logP but shrinking them for QED, which rules out a generic size bias.
The interface is complementary to classifier-free guidance and conditional flow matching, while a negative result under $\epsilon$-prediction diffusion clarifies where coupling-level alignment is structurally absent.
Our code is available at: \href{https://github.com/KehanGuo2/reward-transport}{https://github.com/KehanGuo2/reward-transport}.
\end{abstract}

\input{1-intro}
\input{2-related_work}
\input{3-Method}

\input{4-experiment}
\input{6-discussion}
\input{7-conclusion}



\newpage
\bibliographystyle{plainnat}
\bibliography{references}
\newpage
\input{5-appendix}


\end{document}

%% file: 1-intro.tex
\section{Introduction}
\label{sec:introduction}

\looseness=-1
A flow matching model depends on both the vector-field training objective and the \emph{coupling}: the rule that determines which noise vector is paired with which data point during training. While prior work has carefully studied training objectives ~\citep{karras2022elucidating, esser2024scaling, ma2024sit}, the coupling is often treated as a background implementation choice:
independent pairing for simplicity~\citep{lipman2022flow}, or minibatch optimal transport to straighten trajectories and reduce path crossings~\citep{Tong2023ImprovingAG, pooladian2023multisample}.
In every case, the coupling is chosen to make training easier, not to shape what the model ultimately learns.

We take a different view. The coupling is not a training detail but an \emph{alignment interface}:
it determines which noise samples are associated with which data samples, and therefore whether meaningful data properties become organized in noise space. In most controllable generative models~\citep{ho2022classifier, dhariwal2021diffusion}, steerability is introduced through explicit inputs, such as class labels, property embeddings, or classifier-free guidance. In flow matching, we show that steerability can also be introduced through the coupling itself.
By choosing this assignment according to a target property, the learned flow field inherits a property-aligned organization before any inference-time guidance is applied~\citep{peebles2023dit, zeng2025propmolflow}.


\looseness=-1
Molecules are a natural testbed. They have scalar targets that matter—logP~\citep{wildman1999prediction} and QED~\citep{bickerton2012quantifying}—discrete variable-length samples that stress naive couplings, and properties cheap to verify without an oracle.
We study coupling as an alignment interface in the molecular regime and ask whether choosing it alone is enough to buy controllable generation, using property labels only at training time to build the coupling.

\looseness=-1
We instantiate the interface as \emph{Reward Transport} (Figure~\ref{fig:overview}): a property-aligned monotone coupling that sorts noise vectors by a scalar coordinate $s$ and molecules by a target property $y$, then pairs them rank-by-rank~\citep{villani2003topics}.
The theoretical hinge is simple: under this 1D rearrangement, choosing $s$ at inference---in the coupling-preserving limit---induces the same truncated distribution as one selection step of the Cross-Entropy Method~\citep{rubinstein1999cross}, with no oracle, no gradient, and no rejection (Proposition~\ref{prop:cem}).
A single scalar knob, set once before generation, becomes a distributional statement.

\looseness=-1
Empirically on ZINC-250K~\citep{irwin2012zinc}, that single knob steers logP monotonically over a $3.14$-unit range ($+137\%$, $\rho(s, \bar y){=}1.000$) at $100\%$ validity and zero inference overhead; the same knob, trained on QED, moves QED in the same monotone way (Table~\ref{tab:main}), and the effect reproduces on GuacaMol~\citep{brown2019guacamol}.
More importantly, the knob does not encode a generic structural bias.
The same coordinate \emph{grows} logP molecules from $12$ to $23$ heavy atoms and \emph{shrinks} QED molecules from $23$ to $16$. 
Thus, $s$ is not simply a size coordinate. It represents whichever property the coupling aligns with noise space: the coupling writes the task structure, and the flow transports it.

Our contributions in this work are summarized as follows:
\begin{itemize}[leftmargin=1.5em, itemsep=2pt, topsep=2pt]
    \item \textbf{Coupling as alignment interface.} We reframe the noise–data coupling in flow matching as  a mechanism for injecting task structure into the learned flow field, rather than a training-time heuristic.  In the coupling-preserving limit, property-aligned monotone coupling induces the same truncated distribution as one Cross-Entropy Method selection step (Proposition~\ref{prop:cem}), giving distributional control at inference through a single noise scalar with no added inference cost.
    \item \textbf{Single-knob molecular steering} 
    We demonstrate distribution-level steering of logP and QED on ZINC-250K~\citep{irwin2012zinc} and GuacaMol~\citep{brown2019guacamol} (Table~\ref{tab:main}). The key insight is that the same knob produces opposite structural responses for different targets, ruling out a generic size bias and showing that the coupling writes target-specific structure into the flow.
 \item \textbf{Reward Transport for SELFIES generation.}
We implement Reward Transport for variable-length SELFIES generation, where a continuous flow generates discrete molecular token sequences. We identify and address two failure modes: magnitude erasure under Pre-LayerNorm~\citep{xiong2020layer} and a length-inflation shortcut under padded MSE. This generalizes to other flow models generating discrete, variable-length sequences.
\end{itemize}

%% file: 2-related_work.tex
\section{Related Work}
\label{sec:related_work}
\paragraph{Couplings in flow matching.}
Flow matching~\citep{lipman2022flow, liu2023rectified, albergo2023stochastic} learns an ODE-defined transport from noise to data, in the broader diffusion lineage~\citep{sohldickstein2015deep, ho2020denoising, song2019generative, karras2022elucidating, chen2018neural}. The transport depends on a \emph{coupling}: the joint distribution over noise and data used during training. Standard flow matching draws noise and data independently. OT-CFM~\citep{Tong2023ImprovingAG} and multisample flow matching~\citep{pooladian2023multisample} replace the independent coupling with a minibatch optimal-transport coupling, using Euclidean cost on noise–data pairs to straighten trajectories and reduce training variance. The Schrödinger-bridge view~\citep{debortoli2021diffusion} places the family inside entropic OT~\citep{cuturi2013sinkhorn, peyre2019computational, villani2003topics}. In all of these, the coupling is tuned for optimization; the OT cost is a distance. Reward Transport uses the same coupling machinery with a property-valued cost: noise and data are sorted by a scalar property and paired by rank. The coupling still solves a 1D optimal-transport problem, but the solution now encodes controllable property structure rather than shorter trajectories.

\looseness=-1
\paragraph{Property control in molecular generation.}
Molecular generators use many representations---autoregressive SMILES~\citep{weininger1988smiles, segler2018generating}, junction-tree VAEs~\citep{jin2018junction}, graph normalizing flows~\citep{shi2020graphaf, zang2020moflow}, graph diffusion~\citep{vignac2022digress}, 3D equivariant diffusion~\citep{hoogeboom2022equivariant, jing2022torsional}---benchmarked on GuacaMol~\citep{brown2019guacamol}. We use SELFIES~\citep{krenn2020selfies} tokens for syntactic validity. Methods that control generated properties differ in how property information reaches the sampler. Reinforcement learning fine-tunes the generator against a property oracle~\citep{blaschke2020reinvent, you2018graph, zhou2019moldqn, bengio2023gflownet}; the oracle is queried during training and often during fine-tuning-time sampling. Guidance methods evaluate a property predictor at every sampling step and use its gradient to steer the trajectory~\citep{ho2022classifier, lee2023molguidance}. Conditional flow matching takes the property value as a network input at training and inference time~\citep{zeng2025propmolflow,hou2024improving, deng2026generative}. All three require property information at sampling time—an oracle, a gradient, or a target value. Reward Transport requires none: the property is used only to build the training coupling, and sampling is driven by a scalar noise coordinate. Concurrent work on Bayesian flows over molecular graphs~\citep{xiong2025hierarchical} also modifies the coupling, using a quasi-Wasserstein geometric cost together with sampling-time conditioning, which is complementary to our design.

%% file: 3-Method.tex

\section{Background}
\label{sec:background}

\begin{figure}[t]
    \centering
    \includegraphics[width=\linewidth,trim=0 0 0 12,clip]{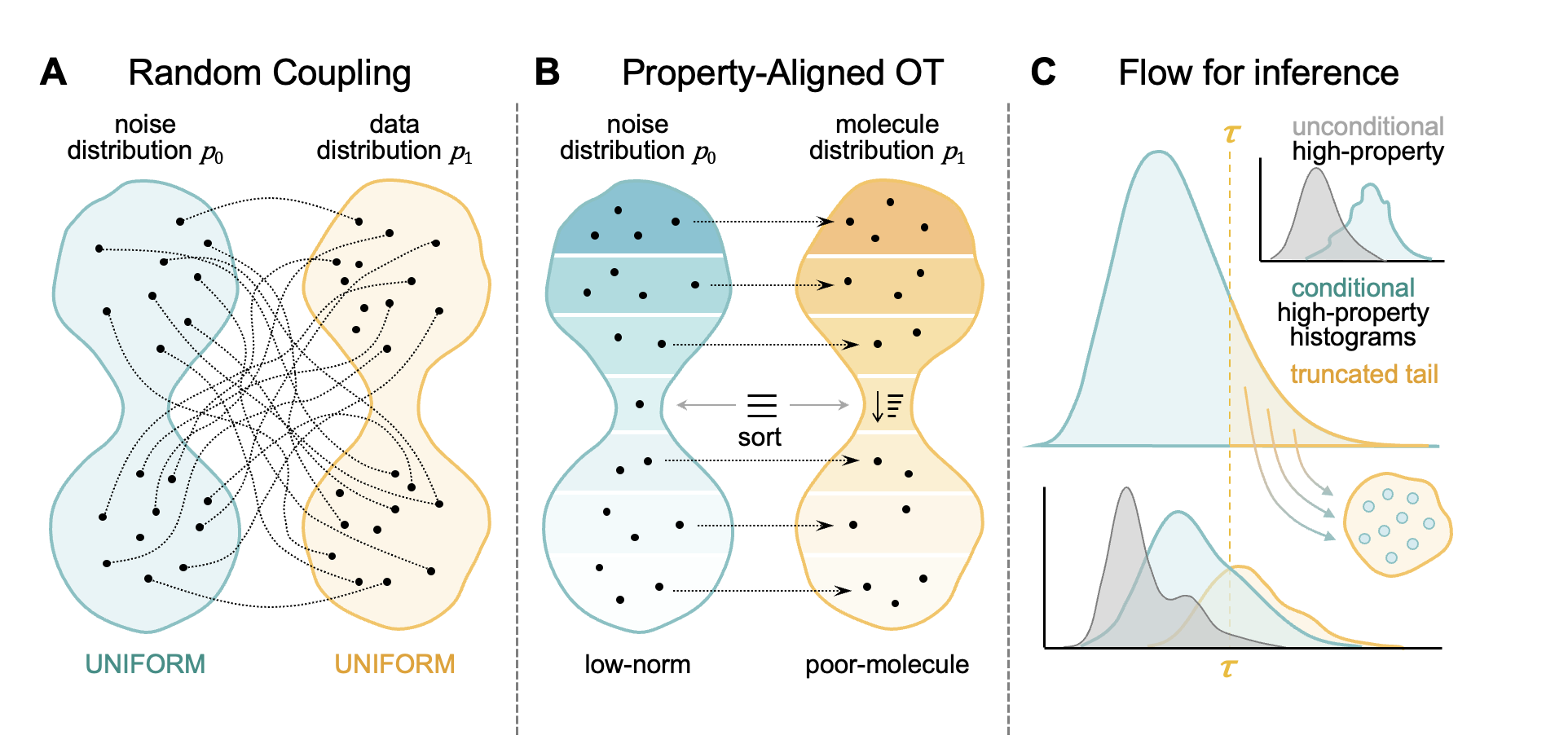}
    \vspace{-0.15in}
    \caption{\textbf{Reward Transport overview.}
    \textbf{(A)}~Standard flow matching pairs noise and data independently, producing no property structure in noise space.
    \textbf{(B)}~Property-aligned OT coupling sorts noise vectors by norm and molecules by target property, then pairs them rank-by-rank, creating a monotone mapping between noise coordinates and molecular properties.
    \textbf{(C)}~At inference, sampling from the upper tail of the noise distribution ($s \geq \tau$) yields a truncated property distribution equivalent to one step of the Cross-Entropy Method (Proposition~\ref{prop:cem}), shifting generated molecules toward higher property values---without any oracle, reward model, or additional computation.}
    \label{fig:overview}
\end{figure}

\paragraph{Flow Matching.}
Flow Matching~\citep{lipman2022flow} learns a velocity field $\mathbf{v}_\theta$ that transforms a prior $p_0 = \mathcal{N}(\mathbf{0}, \mathbf{I})$ into a data distribution $p_1$.
Given a noise--data pair $(\mathbf{z}, \mathbf{x})$ drawn from a coupling $\pi(\mathbf{z}, \mathbf{x})$, the linear interpolation path is
\begin{equation}
\label{eq:interpolation}
    \mathbf{x}_t = (1 - t)\,\mathbf{z} + t\,\mathbf{x}, \qquad t \in [0,1].
\end{equation}
The network $\mathbf{v}_\theta$ is trained to regress the conditional velocity $\mathbf{u} = \mathbf{x} - \mathbf{z}$:
\begin{equation}
\label{eq:fm_loss}
    \mathcal{L}_{\mathrm{FM}}
    = \mathbb{E}_{t,\,\mathbf{z},\,\mathbf{x}}
      \bigl\lVert \mathbf{v}_\theta(\mathbf{x}_t, t) - (\mathbf{x} - \mathbf{z}) \bigr\rVert^2.
\end{equation}
Samples are generated by integrating $\mathrm{d}\mathbf{x}_t/\mathrm{d}t = \mathbf{v}_\theta(\mathbf{x}_t, t)$ from $t{=}0$ to $t{=}1$; we write $\mathbf{x}_\theta(\mathbf{z})$ for the resulting sample.
The coupling $\pi$ determines which noise vector flows to which data point; standard FM draws pairs independently ($\pi = p_0 \otimes p_1$).

\section{Reward Transport}
\label{sec:method}

\subsection{Property-Aligned Optimal Transport Coupling}
\label{sec:ot_coupling}

We represent each molecule as a sequence of $L$ token embeddings in $\mathbb{R}^d$, so noise and data both lie in $\mathbb{R}^{L \times d}$.
The coupling $\pi(\mathbf{z}, \mathbf{x})$ pairs each noise sequence with a data sequence; we replace the standard random pairing with a \emph{monotone} coupling that aligns a scalar summary of the noise with a target molecular property $y(\mathbf{x}) \in \mathbb{R}$.
Let $\bar{\mathbf{z}} = \frac{1}{L}\sum_{i=1}^{L} \mathbf{z}_i \in \mathbb{R}^d$ denote the mean-pooled noise and define the sorting key\footnote{Alternative scalar keys ($\|\cdot\|_\infty$, random linear projection, or a single coordinate) yield comparable results; see Appendix~\ref{app:key_ablation}.}
\begin{equation}
\label{eq:sorting_key}
    s(\mathbf{z}) = \lVert \bar{\mathbf{z}} \rVert_2.
\end{equation}
Given a training set $\{(\mathbf{x}_n, y_n)\}_{n=1}^{N}$ and a fresh sample of noise $\{\mathbf{z}_n\}_{n=1}^{N}$, we sort both sequences by their respective keys and pair rank-by-rank:
\begin{equation}
\label{eq:monotone_coupling}
    \pi^* \colon \mathbf{z}_{\sigma(k)} \;\leftrightarrow\; \mathbf{x}_{\tau(k)},
    \qquad \sigma = \operatorname{argsort}(s(\mathbf{z}_{1:N})),
    \quad \tau = \operatorname{argsort}(y(\mathbf{x}_{1:N})).
\end{equation}
Since $s$ and $y$ are one-dimensional, this is the unique optimal transport plan under any convex cost---the classical monotone rearrangement~\citep{villani2003topics}.

We now formalize what the coupling buys at inference.

\begin{proposition}[Property-Aligned OT as Implicit CEM]
\label{prop:cem}
Let $\pi^*$ be the monotone coupling between $s(\mathbf{z})$ and $y(\mathbf{x})$, with CDFs $F_S, F_Y$, and let
\[
\rho_{\text{per}}(s, y) \;=\; \mathrm{corr}_{\text{Spearman}}\!\bigl(s(\mathbf{z}),\; y(\mathbf{x}_\theta(\mathbf{z}))\bigr)
\]
denote the per-molecule rank correlation between the noise-space key and the generated property.
In the coupling-preserving limit ($\rho_{\text{per}} \to 1$), for any threshold $\tau$:
\begin{equation}
\label{eq:cem}
    p_\theta\!\left(\mathbf{x} \mid s(\mathbf{z}) \geq \tau\right)
    \;\approx\;
    \frac{p_1(\mathbf{x})\;\mathbf{1}\!\bigl[y(\mathbf{x}) \geq y_\tau\bigr]}{Z(\tau)},
    \qquad
    y_\tau = F_Y^{-1}\!\bigl(F_S(\tau)\bigr),
\end{equation}
where $Z(\tau) = 1 - F_Y(y_\tau)$.
\end{proposition}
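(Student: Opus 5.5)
The plan is to make ``coupling-preserving limit'' precise and then reduce the claim to a change of variables along the 1D monotone rearrangement. I would formalize $\rho_{\text{per}} \to 1$ as follows. The learned sampler satisfies $y(\mathbf{x}_\theta(\mathbf{z})) = g(s(\mathbf{z}))$ for a nondecreasing $g$, up to a vanishing error. Moreover, the pushforward of $p_0$ under $\mathbf{z}\mapsto \mathbf{x}_\theta(\mathbf{z})$ equals $p_1$, which is the usual well-trained flow matching assumption: the marginal of the coupling is reproduced. Spearman correlation equal to one means the ranks of $s$ and $y$ agree almost surely. Combined with the marginal constraint (the law of $S=s(\mathbf{z})$ is $F_S$ and the law of $Y=y(\mathbf{x}_\theta(\mathbf{z}))$ is $F_Y$), this forces $g = F_Y^{-1}\circ F_S$. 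This is exactly the monotone map defining $\pi^*$ in Eq.~\eqref{eq:monotone_coupling}, and it is the unique comonotone coupling of the two marginals.

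With this in hand, the main computation conditions on $s(\mathbf{z})\ge\tau$. Since $F_S$ is continuous ($s=\|\bar{\mathbf{z}}\|_2$ has a density under the Gaussian prior), the event $\{S\ge\tau\}$ coincides a.s.\ with $\{F_S(S)\ge F_S(\tau)\}$. Applying the nondecreasing map $F_Y^{-1}$ and using $Y=F_Y^{-1}(F_S(S))$, this event equals $\{Y\ge y_\tau\}$ up to null sets (ties and flat pieces of $F_Y$). Then for any measurable $A$,
\[
P\bigl(\mathbf{x}_\theta(\mathbf{z})\in A \mid S\ge\tau\bigr)=\frac{P\bigl(\mathbf{x}_\theta(\mathbf{z})\in A,\ y(\mathbf{x}_\theta(\mathbf{z}))\ge y_\tau\bigr)}{P(S\ge\tau)} .
\]
The numerator equals $\int_A p_1(\mathbf{x})\mathbf{1}[y(\mathbf{x})\ge y_\tau]\,d\mathbf{x}$ by the pushforward assumption. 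The denominator is $1-F_S(\tau)=1-F_Y(y_\tau)=Z(\tau)$, using $F_Y(F_Y^{-1}(u))=u$ for continuous $F_Y$. This yields Eq.~\eqref{eq:cem} with equality in the limit. The right-hand side is precisely the elite-set distribution of one CEM selection step at quantile $F_S(\tau)$.

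The main obstacle is justifying the passage from ``$\rho_{\text{per}}\to1$'' to the deterministic comonotone relation, and controlling the ``$\approx$''. Spearman $\to1$ only gives convergence in rank (copula) sense. I would handle this by working with the copula $C$ of $(S,Y)$: $\rho_{\text{per}}\to1$ implies $C$ converges to the upper Fréchet bound $M(u,v)=\min(u,v)$ (Spearman's rho is an $L^1$ functional of $C-\Pi$ maximized uniquely at $M$). Then $P(Y\ge y_\tau\mid S\ge\tau)\to1$ and the symmetric-difference mass of the two events vanishes. This gives total-variation convergence of the conditional law to the truncated $p_1$, provided $Z(\tau)$ stays bounded away from zero. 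I would also remark that discrete $y$ (ties) or atoms in $F_Y$ require the generalized inverse and hold only up to boundary mass at $y_\tau$.
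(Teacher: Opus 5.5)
Your proposal is correct and follows essentially the same route as the paper's proof. The monotone coupling sends the upper $(1-F_S(\tau))$-tail of $s$ to the upper tail $\{y\ge y_\tau\}$, so conditioning on $\{s(\mathbf{z})\ge\tau\}$ yields the truncated $p_1$ with normalizer $1-F_S(\tau)=1-F_Y(y_\tau)$; the paper writes this step via Bayes' rule with $\Pr[s\ge\tau\mid\mathbf{x}]\approx\mathbf{1}[y(\mathbf{x})\ge y_\tau]$.

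Your version is more careful in two places. First, you state explicitly the assumption that the generated marginal equals $p_1$, which the paper uses silently when it replaces $p_\theta(\mathbf{x})$ by $p_1(\mathbf{x})$. Second, your copula argument gives the ``$\approx$'' a precise meaning that the paper leaves informal: with $a=F_S(\tau)$, $\rho_{\text{per}}\to1$ forces $C\to M$, so the two events differ by mass $2(a-C(a,a))\to0$.
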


\textit{Proof sketch.}\;
By the monotone coupling, the $\alpha$-quantile of $s$ maps to the $\alpha$-quantile of $y$.
Conditioning on $s \geq \tau$ selects the upper $\varepsilon$-fraction of the noise distribution, where $\varepsilon = 1 - F_S(\tau)$; monotonicity guarantees the paired molecules lie in their upper $\varepsilon$-fraction, yielding Eq.~\eqref{eq:cem}. \hfill$\square$

\looseness=-1
Equation~\eqref{eq:cem} recovers the functional form of one Cross-Entropy Method~\citep{rubinstein1999cross} selection step with elite fraction $\varepsilon = 1 - F_S(\tau)$, obtained in a single forward pass: if the learned flow preserves the rank correspondence ($\rho_{\text{per}}\to 1$), the conditional distribution \emph{is} the data distribution truncated to $\{y \geq y_\tau\}$; the realised $\rho_{\text{per}}$ (§\ref{sec:results}: $0.57$ for logP, $0.22$ for QED) measures the residual slack in this correspondence.

\subsection{Direction Conditioning}
\label{sec:direction_conditioning}

A two-layer GELU MLP $\mathrm{DirEmb}(\cdot)$ maps $s$ to a $d$-dimensional embedding injected additively alongside time and positional embeddings, $\mathbf{h}_0 = \mathbf{x}_t + \mathrm{PE}(i) + \mathrm{TimeEmb}(t) + \mathrm{DirEmb}(\hat{s})$, with $s$ normalised by training-set statistics $(\mu_s, \sigma_s)$ as $\hat{s} = (s - \mu_s)/\sigma_s$.

\subsection{Training}
\label{sec:training}

Let $\mathrm{Proj}\colon \mathbb{R}^d \to \mathbb{R}^V$ denote the linear projection to the token vocabulary of size $V$ (weights tied to the token embedding table), and $\mathrm{tokens}(\mathbf{x})$ the ground-truth token sequence for molecule $\mathbf{x}$.
The network predicts the endpoint $\hat{\mathbf{x}}_1$ (velocity recovered as $(\hat{\mathbf{x}}_1 - \mathbf{x}_t)/(1-t)$) under a combined loss $\mathcal{L} = \lVert \hat{\mathbf{x}}_1 - \mathbf{x}\rVert^2 + \lambda\,\mathrm{CE}(\mathrm{Proj}(\hat{\mathbf{x}}_1), \mathrm{tokens}(\mathbf{x}))$ ($\lambda{=}1$); $\mathcal{L}_{\mathrm{FM}}$ is computed at \emph{all} sequence positions, including padding (§\ref{sec:ablation}).
The MSE term carries the flow-matching signal in continuous embedding space; the CE term anchors decoded tokens in the discrete vocabulary, without which $\hat{\mathbf{x}}_1$ drifts into regions that match the target in $L^2$ but do not decode to valid molecules.
Each epoch resamples noise for all $N$ molecules, sorts by Eq.~\eqref{eq:sorting_key}, and builds the monotone coupling of Eq.~\eqref{eq:monotone_coupling}; stratified sampling covers the full property range per mini-batch (Algorithm~\ref{alg:training}).

\definecolor{ourchange}{RGB}{132,0,0}

\begin{algorithm}[t]
\caption{\textbf{Reward Transport: Training.} Steps in \textcolor{ourchange}{\textbf{red}} are our additions; all other steps are standard flow-matching training.}
\label{alg:training}
\begin{algorithmic}[1]
\REQUIRE Dataset $\{(\mathbf{x}_n, y_n)\}_{n=1}^N$, direction calibration $(\mu_s, \sigma_s)$
\FOR{each epoch}
    \STATE Sample noise: $\mathbf{z}_n \sim \mathcal{N}(\mathbf{0}, \mathbf{I})$ for $n = 1, \ldots, N$
    \STATE \textcolor{ourchange}{Compute keys: $s_n = \lVert \bar{\mathbf{z}}_n \rVert_2$} \hfill \COMMENT{\textcolor{ourchange}{Eq.~\eqref{eq:sorting_key}}}
    \STATE \textcolor{ourchange}{Monotone coupling: pair $\operatorname{argsort}(s_n) \leftrightarrow \operatorname{argsort}(y_n)$} \hfill \COMMENT{\textcolor{ourchange}{Eq.~\eqref{eq:monotone_coupling}}}
    \FOR{each mini-batch $\{(\mathbf{z}_j, \mathbf{x}_j, y_j)\}$}
        \STATE Sample $t \sim \mathcal{U}[0,1]$;\; $\mathbf{x}_t \gets (1 - t)\,\mathbf{z}_j + t\,\mathbf{x}_j$ \hfill \COMMENT{Eq.~\eqref{eq:interpolation}}
        \STATE \textcolor{ourchange}{$\hat{s}_j \gets (s_j - \mu_s) / \sigma_s$;} \;\; $\hat{\mathbf{x}}_1, \mathrm{logits} \gets f_\theta(\mathbf{x}_t, t, \textcolor{ourchange}{\hat{s}_j})$
        \STATE $\mathcal{L} \gets \lVert \hat{\mathbf{x}}_1 - \mathbf{x}_j \rVert^2 + \lambda \cdot \mathrm{CE}(\mathrm{logits}, \mathrm{tokens}_j)$; \;\; update $\theta$
        \STATE \textcolor{ourchange}{Re-zero PAD embedding: $\mathbf{e}_{\texttt{PAD}} \gets \mathbf{0}$} \hfill \COMMENT{\textcolor{ourchange}{prevents length shortcut, §\ref{sec:ablation}}}
    \ENDFOR
\ENDFOR
\end{algorithmic}
\end{algorithm}

\subsection{Inference}
\label{sec:inference}

The user specifies a direction signal $s^*$ and solves $\mathbf{x}_{t+\Delta t} = \mathbf{x}_t + \mathbf{v}_\theta(\mathbf{x}_t, t, s^*)\Delta t$ from $\mathbf{z}\sim\mathcal{N}(\mathbf{0},\mathbf{I})$, decoding tokens via $\arg\max\mathrm{Proj}(\mathbf{x}_1)$ at $t{=}1$.
$s^*$ is a \emph{noise-space coordinate}, not a target property value: $s^*{=}3$ selects the $99.87$th percentile of the training property distribution.

%% file: 4-experiment.tex
\section{Experiments}
\label{sec:experiments}

\subsection{Setup}
\label{sec:setup}

\paragraph{Dataset.}
We use ZINC-250K~\citep{irwin2012zinc}: 224,568 drug-like molecules split 90/5/5 into train/val/test.
We evaluate on two properties with distinct chemical semantics: QED (drug-likeness, $\mu{=}0.73$) and logP (lipophilicity, $\mu{=}2.46$).
Molecules are represented as SELFIES~\citep{krenn2020selfies} (vocabulary 110, max length 72).

\paragraph{Architecture and training.}
We use a 6-layer Transformer ($d{=}768$, 8 heads, ${\sim}$50M params) with $\hat{x}_1$-prediction, trained with AdamW (lr${=}10^{-4}$).
The direction MLP adds ${\sim}$591K params (${\sim}$1\%).
Output projection is weight-tied with the token embedding.
We adopt a two-stage procedure: a base model trained for 120 epochs without OT coupling, followed by fine-tuning for 5--10 epochs with OT coupling and direction conditioning.
We report steerability ($\Delta$, $\rho$) throughout; absolute property levels are bounded by base-model quality and are discussed in §\ref{sec:discussion} (Appendix~\ref{app:qed_results}).

\paragraph{Evaluation.}
We set $s$ directly at inference (no rejection sampling), integrate with 50 Euler steps, and generate $n{=}1000$ molecules per $s$ value.
We report $\Delta$ vs.\ unconditional baseline ($s{=}0$), one-sided Welch's $t$-test, and Cohen's $d$.
All SMILES are validated with RDKit.

\subsection{Main Results}
\label{sec:results}

Section~\ref{sec:method} specified how the coupling enters training. We now ask whether it \emph{transfers} to the flow. The central question is whether the training-time coupling actually shapes the learned flow or merely imposes a correlation the model ignores.

\begin{figure*}[h!]
\centering
\includegraphics[width=\textwidth]{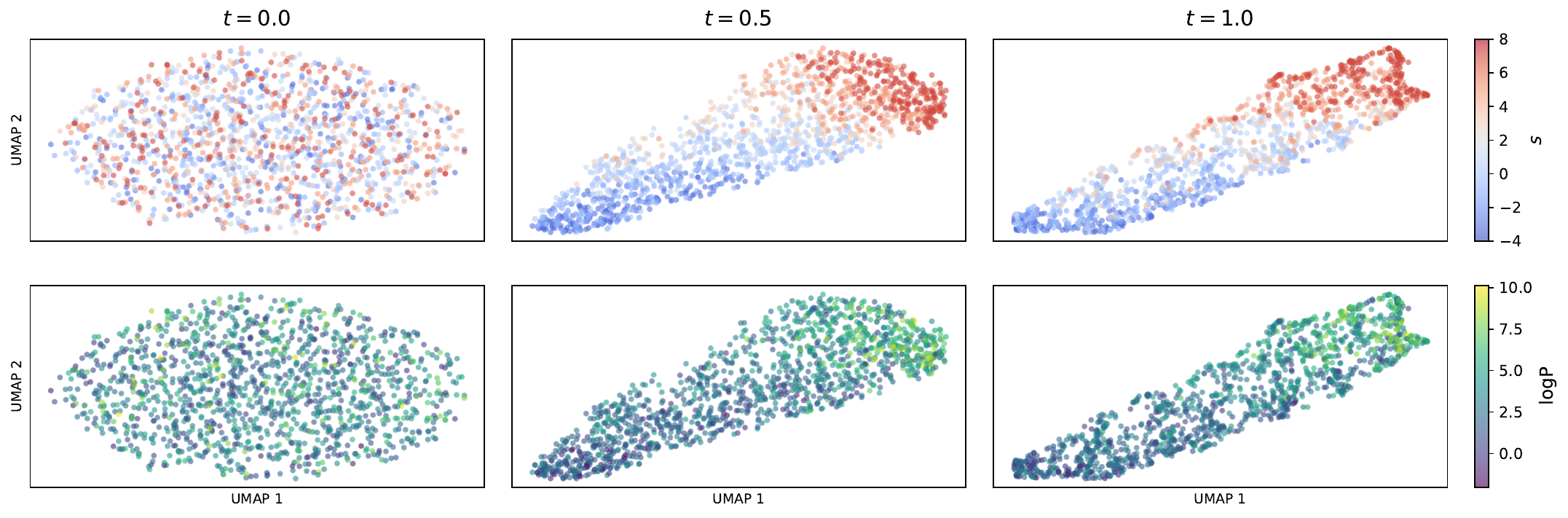}
\vspace{-0.2in}
\caption{\textbf{Rank preservation emerges through the flow.}
UMAP of $\mathbf{x}_t$ for the logP model ($n{=}1{,}400$); top row coloured by $s$, bottom by actual logP.
Structure is absent at $t{=}0$ and aligned by $t{=}1$ (per-molecule $\rho{=}0.57$, $p{<}10^{-120}$); a no-OT base model shows no alignment (Appendix~\ref{app:negative_control}).}
\label{fig:umap_trajectory}
\end{figure*}

\paragraph{Does the coupling transfer to the flow field?}
Figure~\ref{fig:umap_trajectory} provides direct evidence: at $t{=}0$ the noise carries no spatial property information; by $t{=}1$ the $s$-gradient and the logP-gradient are visually aligned (per-molecule $\rho{=}0.57$, $p{<}10^{-120}$).
Per-molecule $\rho$ directly reports the flow's coupling-preservation (Proposition~\ref{prop:cem}): $\rho_{\text{per}}{=}0.57$ (logP), $0.22$ (QED) --- strong but imperfect, and exactly the regime in which Proposition~\ref{prop:cem} becomes a distributional (rather than deterministic) statement.
The gap between per-molecule $\rho{=}0.57$ and group-mean $\rho{=}1.000$ (Table~\ref{tab:main}) is exactly the distributional statement of Proposition~\ref{prop:cem}: the coupling shifts the property \emph{distribution} without deterministically placing individual molecules.

\paragraph{How precisely does $s$ control molecular properties?}
Table~\ref{tab:main} quantifies the effect: sweeping $s\in[{-}3,{+}7]$ shifts logP from $1.50$ to $5.44$ ($+137\%$, $d{=}1.63$) and QED by $\Delta{=}{+}0.059$ ($d{=}0.36$, $p{<}10^{-9}$); both reach $\rho{=}1.000$ with $100\%$ validity and ${>}99\%$ uniqueness throughout.
Figure~\ref{fig:main_results} shows that this is a genuine distributional shift, not merely a mean change.
Crucially, the same $s$ knob grows logP molecules ($12{\to}23$ atoms) but shrinks QED molecules ($23{\to}16$)---opposite structural responses that rule out a generic size bias and reveal a property-specific program.

\begin{table*}[h!]
\centering
\scalebox{0.82}{
\setlength{\tabcolsep}{3pt}
\begin{tabular}{@{}r ccccc c ccccc@{}}
\toprule
& \multicolumn{5}{c}{\textbf{logP} (lipophilicity $\uparrow$)}
& \phantom{a}
& \multicolumn{5}{c}{\textbf{QED} (drug-likeness $\uparrow$)} \\
\cmidrule(lr){2-6} \cmidrule(lr){8-12}
$s$ & Mean & $\Delta$ (\%) & $d$ & Atoms & Val / Uniq (\%)
    & & Mean & $\Delta$ (\%) & $d$ & Atoms & Val / Uniq (\%) \\
\midrule
$-3$ & 1.50 & \cellcolor{neg37}$-$.81$^{***}$ ($-$35\%) & $-$.56 & 12.1 & 100 / 99.3
     & & .336 & \cellcolor{neg20}$-$.083$^{***}$ ($-$20\%) & $-$.47 & 23.3 & 100 / 99.1 \\
$-1$ & 2.02 & \cellcolor{neg15}$-$.28$^{***}$ ($-$12\%) & $-$.20 & 13.2 & 100 / 99.5
     & & .381 & \cellcolor{neg5}$-$.038$^{***}$ ($-$9\%) & $-$.22 & 19.8 & 100 / 99.6 \\
\rowcolor{basegray}
~~0  & 2.30 & --- & --- & 14.0 & 100 / 99.4
     & & .418 & --- & --- & 17.6 & 100 / 99.2 \\
~~1  & 2.62 & \cellcolor{pos10}$+$.32$^{***}$\; ($+$14\%) & $+$.21 & 15.2 & 100 / 99.6
     & & .442 & \cellcolor{pos5}$+$.023$^{**}$\; ($+$6\%) & $+$.14 & 16.5 & 100 / 99.3 \\
~~3  & 3.43 & \cellcolor{pos30}$+$1.12$^{***}$ ($+$49\%) & $+$.70 & 17.6 & 100 / 99.6
     & & .452 & \cellcolor{pos5}$+$.033$^{***}$ ($+$8\%) & $+$.20 & 15.8 & 100 / 99.4 \\
~~5  & 4.43 & \cellcolor{pos100}$+$2.13$^{***}$ ($+$93\%) & $+$1.23 & 20.4 & 100 / 99.6
     & & .477 & \cellcolor{pos10}$+$.059$^{***}$ ($+$14\%) & $+$.36 & 16.1 & 100 / 99.5 \\
~~7  & 5.44 & \cellcolor{pos100}$+$3.14$^{***}$ ($+$137\%) & $+$1.63 & 23.4 & 100 / 99.6
     & & .477 & \cellcolor{pos10}$+$.058$^{***}$ ($+$14\%) & $+$.35 & 16.4 & 100 / 99.5 \\
\midrule
\multicolumn{1}{@{}r}{$\rho_{\text{per}}(s,y)$}        & \multicolumn{2}{c}{$\mathbf{0.57}~(p{<}10^{-120})$} & & &
& & \multicolumn{2}{c}{$\mathbf{0.22}~(p{<}10^{-10})$}  \\
\multicolumn{1}{@{}r}{$\rho_{\text{group}}(s,\bar{y})$} & \multicolumn{2}{c}{$1.000~(p < 10^{-6})$} & & &
& & \multicolumn{2}{c}{$1.000~(p < 10^{-6})$} \\
\bottomrule
\end{tabular}
}
\caption{\textbf{Zero-shot property control via noise-space direction signal $s$.}
A single scalar $s$ monotonically steers both logP and QED with opposite structural responses: higher $s$ produces \emph{larger} molecules for logP ($12{\to}23$ atoms) but \emph{smaller} molecules for QED ($23{\to}16$ atoms), ruling out a generic size bias.
The per-molecule rank correlation $\rho_{\text{per}}(s,y)$ directly measures how well the learned flow preserves the training-time coupling (Proposition~\ref{prop:cem}); the group-mean statistic $\rho_{\text{group}}(s,\bar{y})$ is the coarser $s$-bin$\to$mean-property monotonicity. The gap between the two is the distributional CEM-truncation regime the proposition predicts.
Stronger base model reaches QED$=$0.518 (Appendix~\ref{app:qed_results}).
$n{=}1000$ per condition; $d$: Cohen's effect size.
$^{***}\!p{<}.001$, $^{**}\!p{<}.01$.}
\label{tab:main}
\end{table*}

\begin{figure}[t]
\centering
    \includegraphics[width=\textwidth]{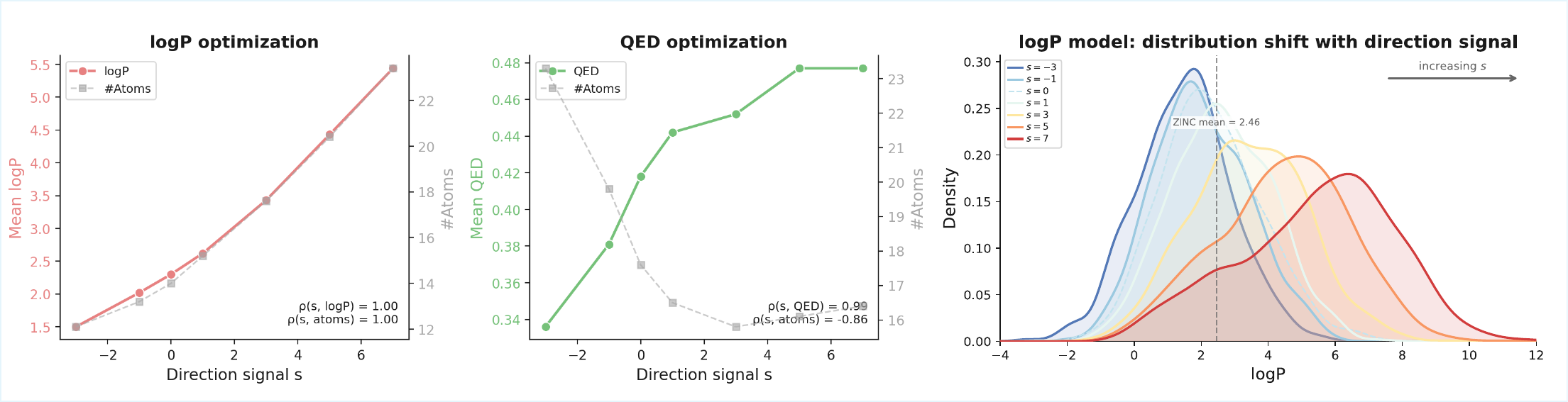}
\caption{\textbf{Property control and distribution shift.}
\textbf{Left:} Mean property vs.\ $s$ for logP and QED ($n{=}1000$ per $s$).
Gray dashed: atom count.
Both achieve $\rho{=}1.000$ with opposite size responses.
\textbf{Right:} Full logP distributions across seven $s$ values, shifting continuously rightward.
Dashed line: ZINC mean ($\text{logP}{=}2.46$).
QED distributions in Appendix~\ref{app:qed_results}.}
\label{fig:main_results}
\end{figure}

\paragraph{Replication on a second dataset.}
\label{par:guacamol}
We repeat the experiment on GuacaMol~\citep{brown2019guacamol} ($1.6$M molecules) with identical hyperparameters (Table~\ref{tab:dataset_summary}).
logP monotonicity reproduces with $\rho{=}1.000$ and a \emph{larger} effect size ($d{=}2.31$ at $s{=}5$ vs.\ $1.23$ on ZINC), consistent with the percentile-truncation mechanism of Proposition~\ref{prop:cem}: a broader empirical property range gives the sorted coupling more room to stretch.
QED transfers as a significant but weaker trend ($\rho{=}0.789$, monotone on $s\in[{-}3,{+}4]$, $p{=}8{\times}10^{-4}$), degrading at extreme $s$ consistent with QED's bounded geometry.
The opposite-size disambiguation is preserved in the monotone regime: logP grows molecules $10.5{\to}39.4$ atoms, QED shrinks them $22.6{\to}13.4$, ruling out a ZINC-specific explanation for the property-specific program.
Full sweep tables and a seed-stability replication are in Appendix~\ref{app:guacamol}.

\begin{table}[h]
\centering
\small
\caption{\textbf{Replication of property control across datasets.}
Group-mean rank correlation $\rho$, property shift $\Delta$ at matched $s{=}3$, and validity. Full sweeps in Appendix~\ref{app:guacamol}.}
\label{tab:dataset_summary}
\setlength{\tabcolsep}{6pt}
\begin{tabular}{@{}l cccc@{}}
\toprule
 & \multicolumn{2}{c}{\textbf{ZINC-250K}} & \multicolumn{2}{c}{\textbf{GuacaMol ($1.6$M)}} \\
\cmidrule(lr){2-3} \cmidrule(lr){4-5}
Metric & logP & QED & logP & QED \\
\midrule
$\rho(s,\bar{y})$                          & $1.000$  & $1.000$  & $\mathbf{1.000}$ & $0.789^{*}$ \\
$\Delta$ at $s{=}3$                        & $+1.12$  & $+0.033$ & $+2.81$          & $+0.057$ \\
Cohen's $d$ at $s{=}3$                     & $0.70$   & $0.20$   & $1.65$           & $0.36$ \\
Validity / Uniqueness (\%)                 & $100/99.6$ & $100/99.4$ & $100/99.8$ & $100/99.8$ \\
Atoms, $s{=}{-}3 \to {+}3$                 & $12 \to 18$ & $23 \to 16$ & $10 \to 24$ & $23 \to 15$ \\
\bottomrule
\end{tabular}
\vspace{2pt}
\begin{flushleft}
\footnotesize $^{*}$GuacaMol QED is monotone for $s \in [-3, +4]$; control degrades at $s{=}7$ (see Appendix~\ref{app:guacamol}).
\end{flushleft}
\end{table}

\paragraph{Same-backbone comparison against Conditional Flow Matching.} Replication confirms the effect is not ZINC-specific. We now ask whether it is not \emph{mechanism}-specific either---whether the same backbone supports property control through conditioning alone.
On the \emph{identical} backbone and training budget, a Conditional FM baseline that injects the property as a conditioning scalar also attains $\rho{=}1.000$ with matched validity and diversity ($0.905$ vs.\ $0.909$), and reaches a wider logP span at the extremes ($1.79{\to}7.99$ vs.\ $1.50{\to}5.50$)---expected, since it is trained to hit explicit targets, whereas Reward Transport steers a distribution via percentile truncation (Prop.~\ref{prop:cem}).
The baseline is not intended to show that Reward Transport dominates conditioning; it shows that coupling alone can support a distinct distribution-level control semantics on a backbone where ordinary conditioning also works---different points on the steerability--interpretability Pareto frontier (Appendix~\ref{app:conditional_comparison}).

Both observations point to one mechanism: the training-time coupling writes a property rank into the flow field that is then transported to the generated samples.
We verify this directly.

\paragraph{Rank preservation as a training dynamic.}
\begin{wrapfigure}{r}{0.44\linewidth}
    \vspace{-1.2em}
    \centering
    \includegraphics[width=\linewidth]{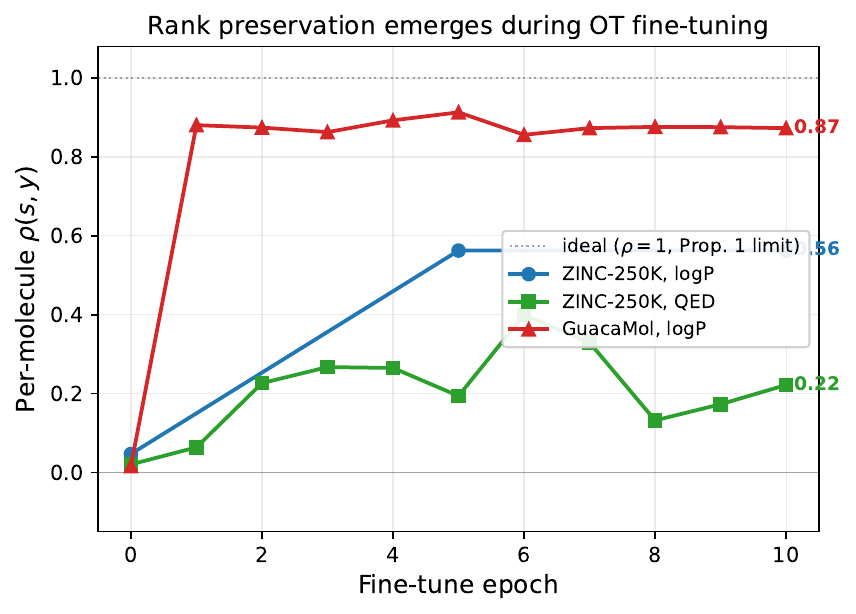}
    \vspace{-1.8em}
    \caption{\textbf{Rank preservation emerges within the first OT fine-tuning epoch.}
    Per-molecule $\rho(s, y)$ across training checkpoints; $\rho{\approx}0$ at base, lifted within one epoch, plateau ${<}1$ is the distributional gap Prop.~\ref{prop:cem} describes.}
    \label{fig:rank_dynamics}
    \vspace{-1.0em}
\end{wrapfigure}
Proposition~\ref{prop:cem} describes the coupling-preserving limit $\rho_{\text{per}}\to 1$; Figure~\ref{fig:rank_dynamics} shows that the realised $\rho_{\text{per}}$ is not an ex-post interpretation but an emergent quantity that appears within the first OT fine-tuning epoch and plateaus at a dataset/property-specific value strictly below $1$.
On GuacaMol logP, $\rho_{\text{per}}$ lifts from $0.02$ (base) to $0.87$ in a single epoch, then oscillates; ZINC-250K QED plateaus near $0.22$ by epoch $2$.
Group-mean $\rho \geq 0.96$ across all three curves from epoch $1$, so the large per-molecule residual is exactly the distributional approximation gap Proposition~\ref{prop:cem} formalises.

\paragraph{Further checks.}
Ten-descriptor profiling (Appendix~\ref{app:cross_property}) shows the two targets induce \emph{opposite}, chemically coherent programs (QED: $\rho_{\text{TPSA}}{=}{-}1.00$; logP: $+.89$) consistent with each property's construction~\citep{bickerton2012quantifying}.
An extended molecular-quality audit on the same $s$-sweeps (Appendix~\ref{app:extended_eval}) shows validity, uniqueness, and novelty flat at $\geq 99\%$; scaffold diversity and SA follow property-specific trajectories (the logP sweep degrades SA $3.97\to 5.32$, a known scalar-logP tradeoff also observed under Conditional FM), and FCD rises mildly at extrapolation extremes in the same $29$--$40$ range as the same-backbone Conditional FM baseline.

\subsection{Failure Modes and Ablation}
\label{sec:ablation}

Having established that the coupling transfers and reproduces across datasets and architectural conditions (§\ref{sec:results}), we next ask \emph{which components of the recipe are load-bearing}, and what breaks when each is removed. Applying continuous OT coupling to discrete sequences reveals two previously unreported failure modes that would affect any flow model in this regime.
Table~\ref{tab:ablation} summarizes the component analysis; a separate robustness check (Table~\ref{tab:key_ablation_main}) shows the method is insensitive to the choice of 1D key.
We then discuss the two discoveries in turn.

\begin{table}[h]
\centering
\small
\begin{tabular}{@{}lccc ccc@{}}
\toprule
& OT & Dir & Unmask & $\Delta$QED ($s{=}5$) & $\rho(s,\text{QED})$ & $\rho(s,\text{atoms})$ \\
\midrule
Full method           & $\checkmark$ & $\checkmark$ & $\checkmark$ & $+$0.059 & 1.000 & $-$0.85 \\
$-$ OT coupling       & $\times$     & $\checkmark$ & $\checkmark$ & $-$0.023 & $-$0.42 & --- \\
$-$ Direction MLP     & $\checkmark$ & $\times$     & $\checkmark$ & $-$0.023 & ---     & --- \\
$-$ Unmasked MSE      & $\checkmark$ & $\checkmark$ & $\times$     & $-$0.006 & 0.65    & $+$0.95 \\
\bottomrule
\end{tabular}
\vspace{0.1in}
\caption{\textbf{Component ablation.}
Removing any single component eliminates property control.
OT coupling is the active ingredient ($\times$\,OT: $\rho{=}{-}0.42$); the direction MLP provides the signal pathway ($\times$\,Dir: $\Delta$ reverses sign); unmasked MSE prevents the length shortcut ($\times$\,Unmask: $s$ controls atoms, not QED).
$n{=}1000$; $\Delta$QED at $s{=}5$.}
\label{tab:ablation}
\end{table}
\vspace{-0.15in}

\paragraph{The 1D key is not privileged.}
A natural concern is whether $\lVert\bar{\mathbf{z}}\rVert_2$ is a tuned or magical choice.
Retraining with three alternatives---an $\ell_\infty$ norm, a random linear projection, and a single coordinate---while holding the backbone and training budget fixed (Table~\ref{tab:key_ablation_main}) shows all four keys achieve group-level $\rho{=}1.000$ at $100\%$ validity, with per-molecule $\rho$ in a narrow band $[0.58, 0.71]$.
$\ell_\infty$ and the random projection attain slightly higher per-molecule $\rho$ than $\ell_2$, within noise at $n{=}500$/$s$; the monotone rearrangement, not the specific scalar summary, carries the coupling signal (Appendix~\ref{app:key_ablation}).

\begin{table}[h]
\centering
\small
\setlength{\tabcolsep}{8pt}
\begin{tabular}{@{}l ccc@{}}
\toprule
Sorting key $s(\mathbf{z})$ & Group $\rho$ & Per-molecule $\rho$ & Cohen's $d_{s=3\,\text{vs}\,s=0}$ \\
\midrule
$\lVert\bar{\mathbf{z}}\rVert_2$ (main)  & $1.000$ & $0.58$ & $0.66$ \\
$\lVert\bar{\mathbf{z}}\rVert_\infty$    & $1.000$ & $0.71$ & $1.06$ \\
Random linear projection                 & $1.000$ & $0.71$ & $1.17$ \\
Single coordinate                        & $1.000$ & $0.63$ & $0.96$ \\
\bottomrule
\end{tabular}
\vspace{2pt}
\caption{\textbf{The 1D sorting key is not privileged.}
All four scalar summaries of $\bar{\mathbf{z}}$ achieve group-level monotone steering at $100\%$ validity; per-molecule $\rho$ sits in a narrow band.
ZINC-250K logP, $n{=}500$/$s$, same backbone and training budget as the main experiment.
Full sweep in Appendix~\ref{app:key_ablation}.}
\label{tab:key_ablation_main}
\end{table}

\paragraph{Discovery 1: Implicit signal erasure.}
Without the direction MLP ($\times$\,Dir), the model cannot access $s$, and the property shift reverses sign ($\Delta\text{QED}{=}{-}0.023$).
The root cause is architectural: Pre-LayerNorm Transformers normalize activations before each layer, erasing the magnitude information that the OT coupling encodes into $\|\bar{\mathbf{z}}\|$.
The direction MLP resolves this by mapping $s$ to an activation \emph{pattern}---a direction in embedding space---that survives normalization.
This finding generalizes within scope: \emph{in Pre-LayerNorm (or similar) architectures with norm-based couplings on discrete sequences, any approach that encodes information into the magnitude of flow matching noise will face the same erasure; Post-LayerNorm or learned-scalar injection schemes may avoid it.}
Replacing OT coupling with random pairing ($\times$\,OT) confirms that even with the direction MLP intact, the training-time coupling is the active ingredient ($\rho{=}{-}0.42$). A second failure mode appears once the coupling \emph{is} reaching the model: the model can still re-route the signal into an unintended feature.

\paragraph{Discovery 2: The length confound.}
Masking the MSE loss at padding positions ($\times$\,Unmask) produces a striking failure: $s$ controls molecular \emph{length} ($\rho(s, \text{atoms}){=}{+}0.95$) rather than QED ($\Delta\text{QED}{=}{-}0.006$, n.s.).
Under masked MSE, the model takes a geometric shortcut---extending sequences is the lowest-resistance response to high-$s$ noise, because longer sequences trivially have higher noise norms.
Computing $\mathcal{L}_{\mathrm{FM}}$ at padding positions breaks this shortcut: since PAD embeddings are zero, unmasked MSE forces the model to regress the \emph{raw noise vector} at every padding position, anchoring the global noise magnitude in its hidden representations---exactly the information that Pre-LayerNorm would otherwise erase.
Extending sequences now costs the readout signal, closing the length shortcut (Appendix~\ref{app:ablation_figs}).

\paragraph{Training dynamics.}
From-scratch training exhibits a transient \emph{signal window}: OT steering emerges at epoch~3, peaks at epoch~7, and is overwritten by epoch~10.
Warm-start fine-tuning keeps the signal significant throughout ($p{<}10^{-3}$ at every epoch) with a $60\%$ higher peak effect size; separating ``learn to generate'' from ``learn to steer'' prevents the generation objective from erasing the coupling.
Stronger bases yield higher absolute QED but smaller marginal $\Delta$, as Proposition~\ref{prop:cem} predicts (Appendix~\ref{app:qed_results}).

%% file: 6-discussion.tex

\section{Discussion}
\label{sec:discussion}

\paragraph{Scope and boundaries.}
Reward Transport gives a one-dimensional distributional knob: it truncates the generated distribution by property rather than assigning exact per-sample values (§\ref{sec:results}).
This shapes both what the interface encodes and where it fails.
On topological properties such as synthetic accessibility, a scalar $\lVert\bar{\mathbf{z}}\rVert_2$ key cannot distinguish topologically simple from complex molecules of similar size, and we observe no meaningful SA control ($|\rho|{<}0.4$; Appendix~\ref{app:sa}).
Extending to multi-property or topology-aware control would require a higher-dimensional coupling key whose inference-time semantics no longer reduce to 1D monotone rearrangement~\citep{villani2003topics}.

The interface also depends on the prediction target.
Under $\hat{\mathbf{x}}_1$-prediction (and the affine-equivalent velocity prediction), the Bayes-optimal target of the network depends on the coupled molecule, so the coupling-induced gradient routes into the Direction MLP.
Under $\varepsilon$-prediction~\citep{ho2020denoising}, the target is a fresh $\boldsymbol{\varepsilon} \sim \mathcal{N}(\mathbf{0}, \mathbf{I})$ independent of the coupling, and the coupling-induced gradient is attenuated by the forward-process signal-to-noise ratio $\alpha_t/\sigma_t$; we observe no measurable coupling effect on QM9+EDM~\citep{hoogeboom2022equivariant}, and Appendix~\ref{app:eps_grad} derives the attenuation analytically.
Coupling-level alignment is therefore available only when the training target depends strongly on the coupled pair.

Reward Transport is complementary to conditional generation: conditioning injects a per-sample target value at training and inference, while coupling imposes a distribution-level rank correspondence at training only.
Our same-backbone Conditional FM experiment (§\ref{sec:results}, Appendix~\ref{app:conditional_comparison}) shows both control modes work in isolation; composing them inside a single trained model, exposing two orthogonal inference-time knobs, is a natural next step.

\paragraph{Limitations.}
Our empirical claims scope to two SELFIES-tokenized datasets (ZINC-250K~\citep{irwin2012zinc} and GuacaMol~\citep{brown2019guacamol}), one property at a time, and a single backbone; unconditional QED ($0.477$ on ZINC) sits below the dataset mean, so absolute property values are base-model-bounded even though steerability is not (Appendix~\ref{app:qed_results}).
On GuacaMol, logP steering is monotone across the full sweep while QED control is restricted to $s \in [{-}3, {+}4]$ and degrades at extreme $s$ (Appendix~\ref{app:guacamol}), consistent with QED's bounded geometry.
We have not tested larger backbones, multi-property targets, or reinforcement-learning fine-tuning on top of a coupling-aligned flow.
Together these boundaries sharpen rather than weaken the reframe: the coupling is an alignment interface precisely because it has scope---working under specific target parameterisations, carrying scalar-distributional properties, and failing predictably where these conditions break.

%% file: 7-conclusion.tex

\section{Conclusion}
\label{sec:conclusion}
The coupling in flow matching has long been chosen to make training easier; we argued it is instead a distribution-level alignment interface, complementary to classifier-free guidance, reward models, and preference optimization.
Reward Transport instantiates this as a monotone rearrangement of noise against a scalar property, whose coupling-preserving limit induces the same truncated distribution as one Cross-Entropy Method selection step (Proposition~\ref{prop:cem}); experiments on ZINC-250K and GuacaMol---replicated across seeds---produce monotone group-level control for logP and QED, and opposite structural responses under the same knob that rule out generic size bias.
The 3D negative result and its signal-to-noise derivation further delineate where the interface exists: the training target's Bayes-optimal predictor must depend on the coupled pair. The interface view opens a design space rather than closing one: the \emph{coupling cost} (scalar monotone here; multi-target or learned rankings elsewhere), the \emph{prediction target} (our analysis identifies $\hat{\mathbf{x}}_1$/velocity as admitting the interface and $\varepsilon$ as not), and \emph{composition} with conditioning (our same-backbone comparison suggests the two are combinable rather than substitutes).
Reward Transport occupies one point in this space, chosen for clarity rather than generality; the framing makes clear what is worth changing. What Reward Transport establishes is not that coupling outperforms conditioning, but that coupling is a distinct, distribution-level control channel that can carry alignment structure on its own.

%% file: 5-appendix.tex
\appendix

\section*{Broader Impact}
\label{app:broader_impact}
 
Reward Transport reduces the computational cost of property-controlled molecular generation by eliminating the need for oracle calls, reward models, or iterative optimization at inference time. This could accelerate early-stage drug discovery and materials screening by enabling rapid exploration of chemical space along desired property axes.
 
The method's controllability is bounded by the training data distribution: the $s$ knob steers generation within (and modestly beyond) the property range present in training molecules, and does not confer the ability to generate arbitrary molecular structures on demand. Nonetheless, any method that facilitates targeted molecular generation carries dual-use risk---the same capability that enables drug optimization could, in principle, be directed toward synthesizing harmful compounds. We note that this risk is not unique to our approach; it applies equally to conditional generation, guided diffusion, and RL-based molecular optimization methods that are already publicly available. The primary mitigation is that translating computationally generated molecules into real-world harm requires wet-lab synthesis expertise and resources that are independent of the generative model.
 
We do not foresee direct fairness or privacy concerns arising from this work, as it operates on molecular representations rather than data about individuals.

\section{Proof of Proposition 1}
\label{app:proof}

\begin{proposition}[Property-Aligned OT as Implicit CEM, restated]
Let $\pi^*$ be the monotone coupling between $s(\mathbf{z})$ and $y(\mathbf{x})$, and let $F_S$, $F_Y$ denote their respective CDFs.
Let $\rho_{\text{per}} = \mathrm{corr}_{\text{Spearman}}\!\bigl(s(\mathbf{z}), y(\mathbf{x}_\theta(\mathbf{z}))\bigr)$ denote the per-molecule rank correlation between the noise key and the generated property---a direct measure of how well the learned flow preserves the coupling.
In the coupling-preserving limit $\rho_{\text{per}}{\to}1$, for any threshold $\tau$:
\begin{equation}
    p_\theta\!\left(\mathbf{x} \mid s(\mathbf{z}) \geq \tau\right)
    \;\approx\;
    \frac{p_1(\mathbf{x})\;\mathbf{1}\!\bigl[y(\mathbf{x}) \geq y_\tau\bigr]}{Z(\tau)},
    \qquad
    y_\tau = F_Y^{-1}\!\bigl(F_S(\tau)\bigr),
\end{equation}
where $Z(\tau) = 1 - F_Y(y_\tau)$.
The realised $\rho_{\text{per}}$ (reported in §\ref{sec:results}: $0.57$ for logP, $0.22$ for QED) quantifies where the learned flow lands relative to the CEM-truncation semantics; smaller $\rho_{\text{per}}$ implies a distributional approximation rather than a pointwise equality.
\end{proposition}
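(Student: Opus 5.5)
The plan is to make the ``coupling-preserving limit'' precise and then reduce the statement to a quantile-transform computation. I would formalise $\rho_{\text{per}}\to 1$ as the statement that, in the limit, the generated property is an almost-surely nondecreasing function of the key, and that the generated marginal equals the data marginal. Concretely, I would write $y(\mathbf{x}_\theta(\mathbf{z})) = T(s(\mathbf{z}))$ with $T$ nondecreasing, and assume $\mathbf{x}_\theta(\mathbf{z}) \sim p_1$ when $\mathbf{z}\sim p_0$. The first part is justified because Spearman correlation equal to $1$ for continuous variables forces a.s. comonotonicity. The second is the standard flow-matching consistency assumption: a perfectly trained flow under any coupling with marginals $(p_0,p_1)$ pushes $p_0$ to $p_1$.

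Next I would identify $T$. Since $T(S)$ has law $F_Y$ and $T$ is monotone, the uniqueness of the monotone rearrangement invoked after Eq.~\eqref{eq:monotone_coupling} gives $T = F_Y^{-1}\circ F_S$ on the support. I would assume $F_S$ is continuous, which holds because $s$ is the norm of a Gaussian mean. This is exactly the map the training coupling $\pi^*$ realises in the population limit $N\to\infty$, where the empirical rank pairing converges to the quantile coupling. The event $\{s(\mathbf{z})\ge\tau\}$ therefore coincides a.s. with $\{U \ge F_S(\tau)\}$, where $U = F_S(s(\mathbf{z}))$ is uniform. Monotonicity then gives $\{y(\mathbf{x}_\theta(\mathbf{z})) \ge y_\tau\}$ with $y_\tau = F_Y^{-1}(F_S(\tau))$, up to null sets and atoms of $F_Y$. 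The probability of this event is $1-F_S(\tau) = 1 - F_Y(y_\tau) = Z(\tau)$ when $F_Y$ is continuous at $y_\tau$.

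For the density statement, I would condition the joint law of $(\mathbf{z},\mathbf{x}_\theta(\mathbf{z}))$ on this event. Because $\mathbf{x}_\theta(\mathbf{z})\sim p_1$ and the event is a.s. measurable with respect to $y(\mathbf{x}_\theta(\mathbf{z}))$, Bayes' rule gives $p_\theta(\mathbf{x}\mid s\ge\tau) = p_1(\mathbf{x})\mathbf{1}[y(\mathbf{x})\ge y_\tau]/Z(\tau)$. This is Eq.~\eqref{eq:cem}. Matching the elite fraction $\varepsilon = 1-F_S(\tau)$ to one CEM selection step is then immediate.

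The main obstacle is making ``$\approx$'' quantitative for $\rho_{\text{per}}<1$, and handling ties or atoms in $F_Y$, which matter because QED and logP are computed on discrete molecules. For atoms, I would state the result with the generalized inverse and allow randomised tie-breaking at $y_\tau$. For the approximate regime, I would first try a stability bound. I would use the fact that Spearman's $\rho$ equals $12\,\mathrm{Cov}(U,V)$ for the copula ranks $(U,V)$, and bound the total-variation distance between the true conditional and the truncated target by the probability of rank discordance across the threshold. This in turn is controlled through $\mathbb{E}(U-V)^2 = (1-\rho_{\text{per}})/6$ by a Chebyshev-type argument near the cutoff. I would present this only as a heuristic continuity argument rather than claim a sharp constant.
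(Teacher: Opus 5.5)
Your proposal is correct and follows the paper's proof: the monotone coupling maps $\{s\ge\tau\}$ to $\{y\ge y_\tau\}$ with $y_\tau = F_Y^{-1}(F_S(\tau))$, and Bayes' rule with $\Pr[s\ge\tau\mid\mathbf{x}] = \mathbf{1}[y(\mathbf{x})\ge y_\tau]$ and $p_\theta = p_1$ gives the truncated density. You go further than the paper in three ways: you state the limit explicitly as comonotonicity plus marginal consistency (the paper silently replaces $p_\theta$ by $p_1$), you derive $T = F_Y^{-1}\circ F_S$ instead of assuming it, and you handle atoms of $F_Y$; your Spearman/Chebyshev stability sketch for $\rho_{\text{per}}<1$ is extra, since the paper offers nothing quantitative for that regime.
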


\begin{proof}
The monotone coupling $\pi^*$ pairs the $\alpha$-quantile of $s$ with the $\alpha$-quantile of $y$ for all $\alpha \in [0, 1]$.
Formally, if $s_\alpha = F_S^{-1}(\alpha)$ and $y_\alpha = F_Y^{-1}(\alpha)$, then under $\pi^*$, the noise vector with sorting key $s_\alpha$ is paired with the molecule having property $y_\alpha$.

Since $s$ and $y$ are both one-dimensional, this rank-preserving pairing is the unique optimal transport plan under any convex cost $c(a, b) = h(|a - b|)$ with $h$ convex and increasing (monotone rearrangement theorem; \citealp{villani2003topics}).

Now consider conditioning on $s(\mathbf{z}) \geq \tau$.
Let $\alpha_\tau = F_S(\tau)$, so that $\Pr[s \geq \tau] = 1 - \alpha_\tau \triangleq \varepsilon$.
Under the monotone coupling, the set $\{s \geq \tau\}$ maps exactly to the set $\{y \geq y_\tau\}$ where $y_\tau = F_Y^{-1}(\alpha_\tau)$.

If the flow model has converged such that the learned mapping $\mathbf{z} \mapsto \mathbf{x}$ preserves this rank correspondence (i.e., a noise vector with $s(\mathbf{z}) = s_\alpha$ is transported to a molecule with $y(\mathbf{x}) \approx y_\alpha$), then:
\begin{align}
    p_\theta(\mathbf{x} \mid s(\mathbf{z}) \geq \tau)
    &= \frac{p_\theta(\mathbf{x}) \cdot \Pr[s(\mathbf{z}) \geq \tau \mid \mathbf{x}]}{\Pr[s(\mathbf{z}) \geq \tau]} \\
    &\approx \frac{p_1(\mathbf{x}) \cdot \mathbf{1}[y(\mathbf{x}) \geq y_\tau]}{\varepsilon} \\
    &= \frac{p_1(\mathbf{x}) \cdot \mathbf{1}[y(\mathbf{x}) \geq y_\tau]}{Z(\tau)},
\end{align}
where the approximation uses the convergence assumption: a molecule $\mathbf{x}$ with $y(\mathbf{x}) \geq y_\tau$ was generated from noise with $s(\mathbf{z}) \geq \tau$ (and vice versa), so $\Pr[s \geq \tau \mid \mathbf{x}] \approx \mathbf{1}[y(\mathbf{x}) \geq y_\tau]$.

The resulting distribution is the data distribution truncated to the top-$\varepsilon$ fraction by property value---precisely the output of one selection step of the Cross-Entropy Method~\citep{rubinstein1999cross} with elite fraction $\varepsilon = 1 - F_S(\tau)$.

\textbf{Assumptions and limitations.}
The approximation relies on two conditions: (1)~the flow model has sufficient capacity to learn the coupling-preserving mapping, and (2)~the monotone coupling between the 1D summaries $s$ and $y$ reflects the structure of the full high-dimensional mapping.
Condition (2) is inherently approximate---the 1D projection $s(\mathbf{z}) = \lVert \bar{\mathbf{z}} \rVert_2$ captures only the norm of the mean-pooled noise, not its full geometry.
In practice, we observe strong but imperfect rank preservation ($\rho = 1.000$ on aggregated $s$-sweep, Section~\ref{sec:results}).
\end{proof}

\newpage
\section{Extended QED Results}
\label{app:qed_results}

The main text reports logP as the primary result due to its stronger effect size ($d{=}1.64$) and wider dynamic range.
Here we provide complete QED results, including distribution shifts, two training configurations, and UMAP visualization.

\paragraph{Distribution shift.}
Figure~\ref{fig:qed_histogram} shows the full QED distribution at each $s$ value.
The rightward shift with increasing $s$ is visible but more modest than for logP (Figure~\ref{fig:main_results}, right), consistent with QED's narrower range on ZINC-250K.
The ZINC training mean ($\text{QED}{=}0.728$) remains above all generated distributions, reflecting the base model quality gap discussed in Section~\ref{sec:setup}.

\begin{figure}[h]
    \centering
    \includegraphics[width=0.65\linewidth]{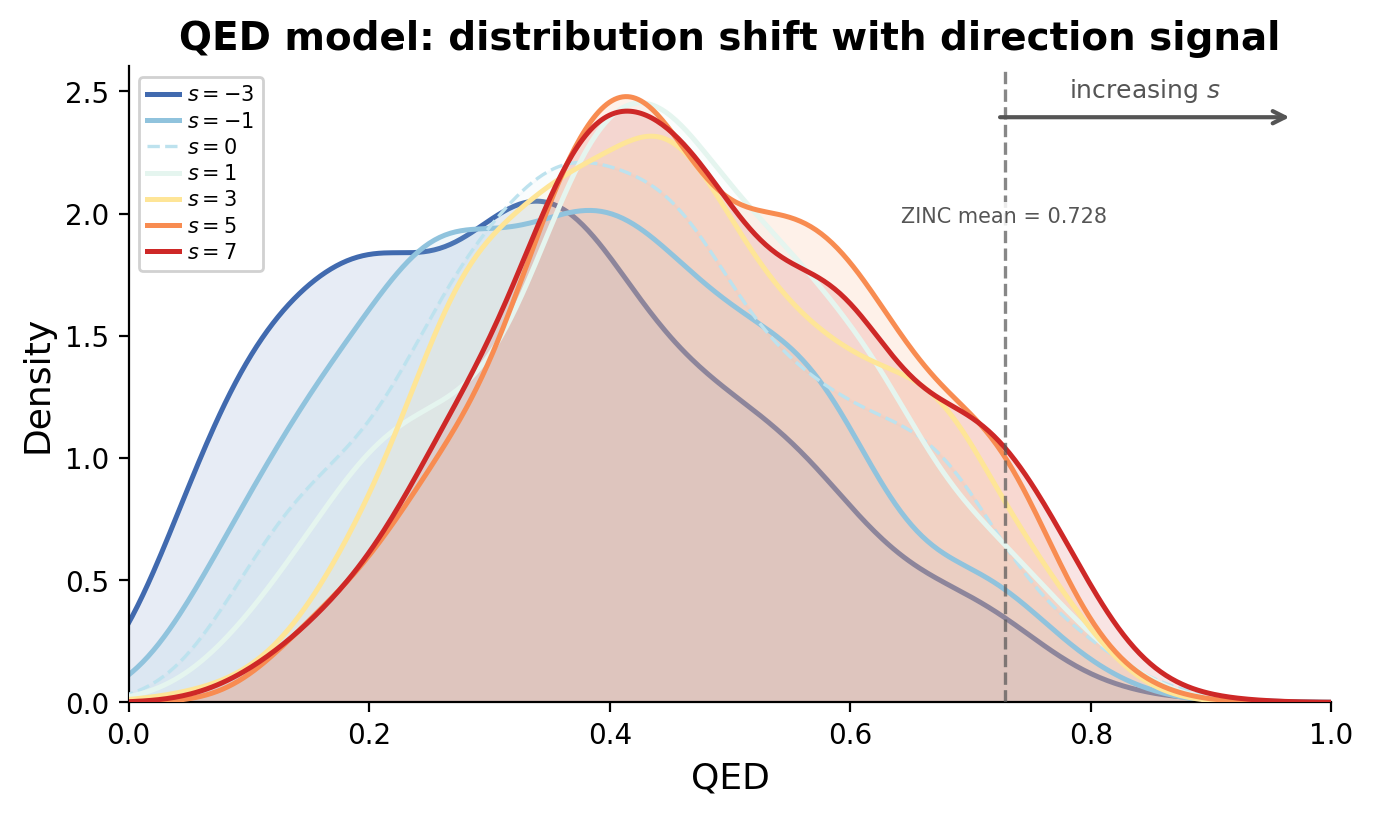}
    \caption{\textbf{QED distribution shift.}
    Generated QED distributions across all seven $s$ values ($n{=}1000$ per $s$).
    Distributions shift rightward with increasing $s$, though the effect is smaller than for logP.
    Dashed line: ZINC training mean ($\text{QED}{=}0.728$).}
    \label{fig:qed_histogram}
\end{figure}

\paragraph{Standard configuration.}
Table~\ref{tab:qed_standard} reports the full $s$-sweep for the standard configuration (OT + direction MLP + unmasked MSE, warm-start from early checkpoint, 5 epochs fine-tuning, base QED $= 0.414$).
This is the configuration reported in the main text Table~\ref{tab:main}.

\begin{table}[h]
\centering
\caption{\textbf{QED $s$-sweep, standard configuration} ($n{=}1000$).}
\label{tab:qed_standard}
\small
\begin{tabular}{@{}r cccccc@{}}
\toprule
$s$ & Mean QED & $\Delta$ & $p$-value & $d$ & Atoms & Unique \\
\midrule
$-3$ & 0.333 & $-$0.081 & $4.6{\times}10^{-24}$ & $-$0.53 & 23.2 & 991 \\
$-1$ & 0.380 & $-$0.034 & $1.7{\times}10^{-5}$  & $-$0.36 & 19.4 & 996 \\
~~0  & 0.415 & ---      & ---                     & ---     & 17.6 & 992 \\
~~1  & 0.436 & $+$0.022 & $4.0{\times}10^{-3}$   & $+$0.05 & 16.6 & 993 \\
~~3  & 0.458 & $+$0.044 & $3.3{\times}10^{-9}$   & $+$0.09 & 16.2 & 994 \\
~~5  & 0.459 & $+$0.045 & $2.7{\times}10^{-9}$   & $+$0.27 & 16.3 & 995 \\
~~7  & 0.476 & $+$0.062 & $3.3{\times}10^{-16}$  & $+$0.27 & 16.4 & 995 \\
\midrule
\multicolumn{2}{@{}l}{$\rho(s, \text{QED})$} & \multicolumn{2}{c}{$1.000~(p < 10^{-6})$} \\
\bottomrule
\end{tabular}
\end{table}

\paragraph{Strong base configuration.}
Table~\ref{tab:qed_strong} reports the same method warm-started from the 120-epoch base model (direction MLP with $10{\times}$ learning rate, 10 epochs fine-tuning, base QED $= 0.477$).

\begin{table}[h]
\centering
\caption{\textbf{QED $s$-sweep, strong base configuration} ($n{=}500$).}
\label{tab:qed_strong}
\small
\begin{tabular}{@{}r cccccc@{}}
\toprule
$s$ & Mean QED & $\Delta$ & $p$-value & $d$ & Atoms & Unique \\
\midrule
$-3$ & 0.384 & $-$0.093 & $3.1{\times}10^{-16}$ & $-$0.53 & 21.2 & 500 \\
$-2$ & 0.413 & $-$0.064 & $1.7{\times}10^{-8}$  & $-$0.36 & 19.3 & 500 \\
$-1$ & 0.460 & $-$0.016 & $0.14$                 & $-$0.09 & 17.1 & 500 \\
~~0  & 0.473 & $-$0.004 & $0.70$                 & $-$0.02 & 15.5 & 499 \\
~~1  & 0.484 & $+$0.007 & $0.46$                 & $+$0.05 & 13.7 & 500 \\
~~3  & 0.491 & $+$0.014 & $0.14$                 & $+$0.09 & 12.5 & 500 \\
~~5  & 0.506 & $+$0.029 & $3{\times}10^{-3}$     & $+$0.19 & 12.3 & 499 \\
~~7  & 0.518 & $+$0.042 & $2{\times}10^{-5}$     & $+$0.27 & 12.6 & 500 \\
\midrule
\multicolumn{2}{@{}l}{$\rho(s, \text{QED})$} & \multicolumn{2}{c}{$0.965~(p = 2.5{\times}10^{-8})$} \\
\bottomrule
\end{tabular}
\end{table}

The strong base model achieves the highest absolute QED ($0.518$ at $s{=}7$) but smaller marginal improvement ($\Delta{=}{+}0.042$ vs.\ ${+}0.062$).
This trade-off is predicted by the CEM equivalence (Proposition~\ref{prop:cem}): as the base distribution approaches the target, truncation provides diminishing marginal gain.
The rank correlation remains near-perfect ($\rho{=}0.965$), confirming that steerability is robust across base model quality.

\paragraph{UMAP rank preservation.}
Figure~\ref{fig:umap_qed} shows the same rank preservation analysis as Figure~\ref{fig:umap_trajectory} (main text) for the QED model.
The gradient alignment between $s$-coloring and QED-coloring is present but weaker than for logP, consistent with QED's smaller per-molecule rank correlation ($\rho{=}0.22$ vs.\ $0.57$).

\begin{figure}[h]
    \centering
    \includegraphics[width=0.85\linewidth]{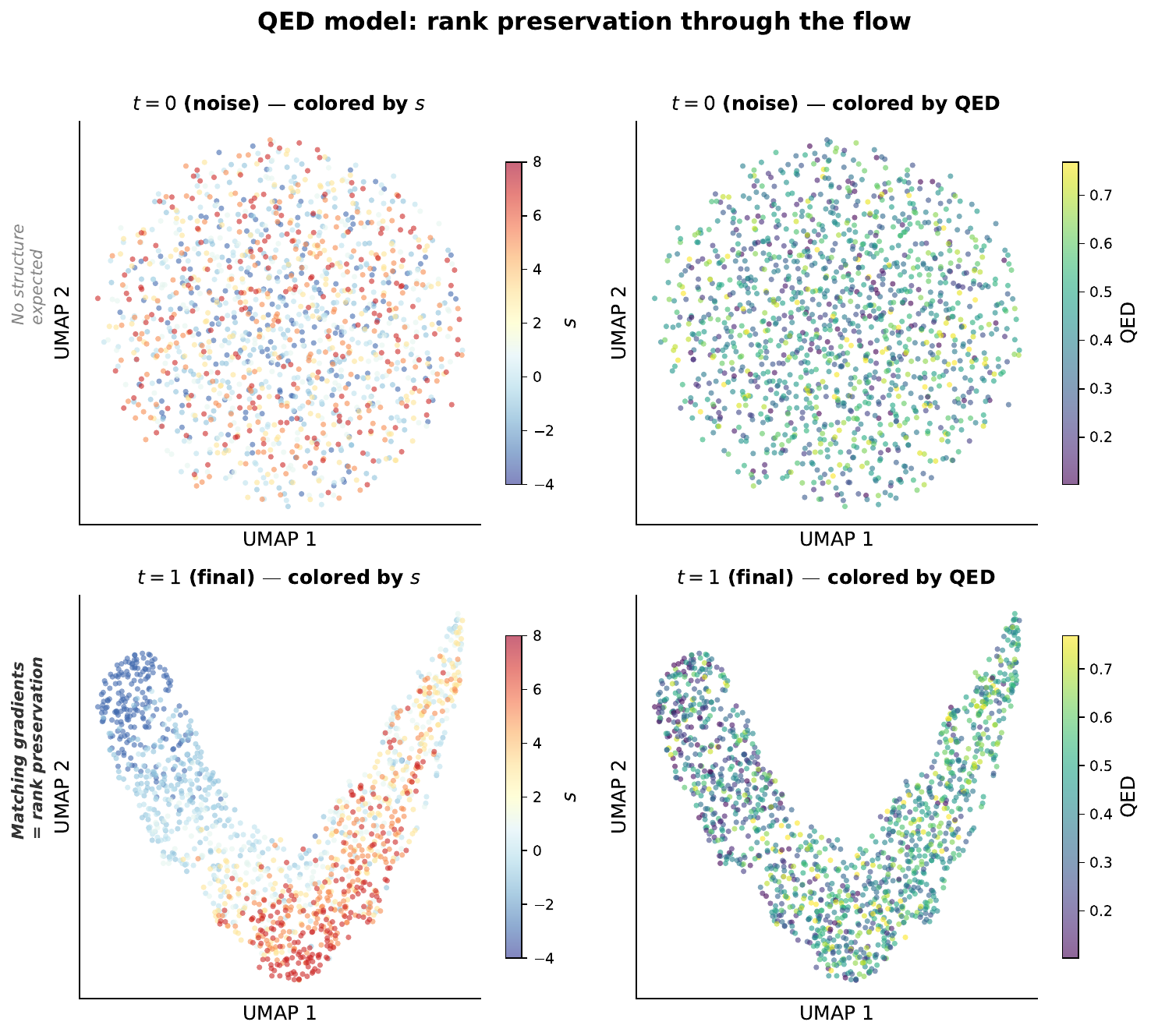}
    \caption{\textbf{Rank preservation: QED model.}
    UMAP projections of $\mathbf{x}_t$ at $t{=}0$ (top) and $t{=}1$ (bottom), colored by $s$ (left) and by actual QED (right).
    At $t{=}0$, no structure is present.
    At $t{=}1$, both gradients align (per-molecule $\rho(s, \text{QED}){=}0.22$), though with more overlap than the logP model (Figure~\ref{fig:umap_trajectory}), reflecting QED's smaller effect size.}
    \label{fig:umap_qed}
\end{figure}
\section{Unconditional Generation Quality}
\label{app:moses}

We evaluate unconditional generation quality of our base model (120 epochs, no OT coupling) using standard molecular generation metrics on $n{=}10{,}000$ generated molecules.

\begin{table}[h]
\centering
\caption{Unconditional generation quality (MOSES benchmark metrics).}
\label{tab:moses}
\small
\begin{tabular}{@{}lcc@{}}
\toprule
Metric & Ours & CharRNN$^\dagger$ \\
\midrule
Validity (\%)     & 100.0 & 97.5 \\
Unique@1k (\%)    & 99.6  & 99.9 \\
Unique@10k (\%)   & 98.7  & 99.8 \\
Novelty (\%)      & 100.0 & 84.2 \\
IntDiv1           & 0.927 & 0.856 \\
IntDiv2           & 0.917 & --- \\
\bottomrule
\end{tabular}
\begin{flushleft}
\small $^\dagger$Typical MOSES benchmark values from literature.
\end{flushleft}
\end{table}

The base model achieves 100\% validity (guaranteed by SELFIES), 100\% novelty (zero memorization), and high internal diversity (IntDiv1$=$0.927, above typical baselines).
Uniqueness at scale is slightly below autoregressive baselines (98.7\% at $n{=}10{,}000$), indicating minor mode repetition.

\section{Full Cross-Property Descriptor Table}
\label{app:cross_property}

Table~\ref{tab:cross_property} reports rank correlations $\rho(s, \text{descriptor})$ between the direction signal $s$ and ten molecular descriptors for both the QED-optimised and logP-optimised models ($n{=}1000$ per $s$, Spearman, bold indicates $p{<}.01$).
The two property targets induce \emph{opposite}, chemically coherent programs---mirroring the known construction of each score---while leaving synthesis-relevant features (SA score, frac\_sp3) essentially untouched.

\begin{table}[h]
\centering
\small
\caption{\textbf{Cross-property descriptor response.} $\rho(s, \text{property})$ for the QED-optimised and logP-optimised models. Bold: $p{<}.01$.}
\label{tab:cross_property}
\begin{tabular}{@{}l rr@{}}
\toprule
Descriptor  & QED model & logP model \\
\midrule
QED         & $\mathbf{+.93}$  & $+.18$ \\
logP        & $-.57$           & $\mathbf{+1.00}$ \\
\midrule
\# Atoms    & $\mathbf{-.96}$  & $\mathbf{+.96}$ \\
MW          & $\mathbf{-.86}$  & $\mathbf{+.93}$ \\
TPSA        & $\mathbf{-1.00}$ & $\mathbf{+.89}$ \\
\# HBA      & $\mathbf{-1.00}$ & $\mathbf{+.86}$ \\
\# HBD      & $\mathbf{-.92}$  & $+.50$ \\
\# Rings    & $\mathbf{-.89}$  & $\mathbf{+.82}$ \\
\midrule
SA score    & $-.39$           & $+.21$ \\
frac\_sp3   & $.00$            & $-.14$ \\
\bottomrule
\end{tabular}
\end{table}

\section{Replication on GuacaMol}
\label{app:guacamol}

To test whether the coupling--flow alignment demonstrated on ZINC-250K is specific to that dataset's chemical distribution, we repeat the training and evaluation on GuacaMol~\citep{brown2019guacamol} ($1.6$M molecules, SELFIES tokenization, same model architecture and hyperparameters).
Base training runs for $100$ epochs; OT fine-tuning uses the same recipe as ZINC ($10$ epochs, warm-start from base).
We report the full $s$-sweep for logP (Table~\ref{tab:guacamol_logp}) and QED (Table~\ref{tab:guacamol_qed}), each evaluated at $n{=}500$ molecules per $s$ value.

\begin{table}[h]
\centering
\caption{\textbf{GuacaMol logP $s$-sweep} ($n{=}500$). Monotone across the full range; opposite-size growth pattern matches ZINC.}
\label{tab:guacamol_logp}
\small
\begin{tabular}{@{}r ccccccc@{}}
\toprule
$s$ & Mean logP & $\Delta$ & $p$-value & $d$ & Atoms & Val\% & Unique \\
\midrule
$-3$   & 0.15  & $-2.14$ & $3{\times}10^{-115}$ & $-1.72$ & 10.5 & 100 & 498 \\
$-1$   & 1.41  & $-0.88$ & $1{\times}10^{-24}$  & $-0.67$ & 11.7 & 100 & 497 \\
~~0    & 2.24  & ---     & ---                    & ---     & 14.8 & 100 & 500 \\
~~1    & 3.29  & $+0.997$ & $4{\times}10^{-27}$  & $+0.70$ & 18.0 & 100 & 500 \\
~~3    & 5.10  & $+2.81$ & $3{\times}10^{-113}$ & $+1.65$ & 23.7 & 100 & 499 \\
~~5    & 7.33  & $+5.04$ & $6{\times}10^{-172}$ & $+2.31$ & 29.4 & 100 & 499 \\
~~7    & 12.11 & $+9.82$ & $3{\times}10^{-225}$ & $+3.20$ & 39.4 & 100 & 500 \\
\midrule
\multicolumn{2}{@{}l}{$\rho(s, \overline{\text{logP}})$} & \multicolumn{2}{c}{$1.000~(p < 10^{-6})$} & & & & \\
\bottomrule
\end{tabular}
\end{table}

\begin{table}[h]
\centering
\caption{\textbf{GuacaMol QED $s$-sweep} ($n{=}500$). Monotone for $s\in[-3, +4]$ with peak at $s{=}4$; control degrades at $s{=}7$ consistent with QED's bounded range.}
\label{tab:guacamol_qed}
\small
\begin{tabular}{@{}r ccccccc@{}}
\toprule
$s$ & Mean QED & $\Delta$ & $p$-value & $d$ & Atoms & Val\% & Unique \\
\midrule
$-3$   & 0.292 & $-0.166$ & $3{\times}10^{-46}$ & $-0.95$ & 22.6 & 100 & 498 \\
$-1$   & 0.437 & $-0.021$ & $0.056$              & $-0.12$ & 15.7 & 100 & 499 \\
~~0    & 0.471 & ---      & ---                   & ---     & 13.1 & 100 & 500 \\
~~1    & 0.477 & $+0.019$ & $0.061$              & $+0.12$ & 12.9 & 100 & 499 \\
~~3    & 0.515 & $+0.057$ & $2{\times}10^{-8}$   & $+0.36$ & 14.7 & 100 & 499 \\
~~4    & 0.543 & $+0.085$ & $6{\times}10^{-16}$  & $+0.52$ & 15.9 & 100 & 500 \\
~~5    & 0.523 & $+0.065$ & $6{\times}10^{-10}$  & $+0.40$ & 16.6 & 100 & 499 \\
~~7    & 0.452 & $-0.006$ & $0.58$ (n.s.)         & $-0.03$ & 18.7 & 100 & 499 \\
\midrule
\multicolumn{2}{@{}l}{$\rho(s, \overline{\text{QED}})$} & \multicolumn{2}{c}{$0.789~(p = 8{\times}10^{-4})$} & & & & \\
\bottomrule
\end{tabular}
\end{table}

\paragraph{What transfers, and what doesn't.}
Three observations sharpen the replication claim.
First, logP control is \emph{stronger} on GuacaMol than on ZINC at every matched $s$: Cohen's $d$ at $s{=}5$ rises from $1.23$ (ZINC) to $2.31$ (GuacaMol), at $100\%$ validity on both.
We attribute this to GuacaMol's broader chemical distribution---more extreme molecules are available to pair with extreme noise---rather than a change in method effectiveness, so we treat $\rho$ (rank monotonicity) as the cross-dataset comparison and $\Delta$ (absolute shift) as dataset-specific.
Second, the opposite-size disambiguation reported in the main text replicates in the monotone regime: logP grows molecules $10.5{\to}39.4$ heavy atoms; QED shrinks them $22.6{\to}13.4$ over $s\in[{-}3,{+}2]$, then rebounds to $18.7$ at $s{=}7$ as QED control is lost.
Third, QED rank monotonicity is weaker ($\rho{=}0.79$ vs.\ $1.00$ on ZINC) and bounded: control is statistically significant and monotone for $s\in[{-}3,{+}4]$ but reverts at $s{=}7$, where $\Delta{=}{-}0.006$ is not distinguishable from zero.
This behaviour is consistent with QED's construction as a bounded multi-criteria score~\citep{bickerton2012quantifying}: pushing $s$ beyond the training coverage drives generations out of the region where high QED is reachable by size reduction alone, and the coupling's rank order no longer aligns with QED's geometry.
The interface does what the reframe predicts within its scope of validity; its limits are readable from the property.

\paragraph{Seed stability.}
\label{par:guacamol_seeds}
To verify that the GuacaMol results above are not an artifact of a particular random initialization, we retrain both runs with a different random seed (PyTorch/NumPy seed $=2$) under an otherwise identical recipe and re-evaluate at $n{=}500$ per $s$ (Table~\ref{tab:guacamol_seed2}).
The qualitative structure of the sweep is preserved across seeds.
On logP, rank monotonicity remains near-perfect ($\rho{=}1.000 \to 0.996$) and the opposite-size pattern (molecules growing with $s$) is reproduced.
On QED, rank correlation stays strongly positive ($\rho{=}0.789 \to 0.706$, $p{<}0.005$), the peak $\Delta$ remains inside the same monotone range ($s\in[{+}3,{+}5]$), and the $s{=}7$ control degradation reappears---ruling out the QED saturation as an accident of initialization.
Absolute effect sizes shift modestly between seeds: seed 2's logP model carries a higher unconditional baseline ($5.29$ vs.\ $2.29$), which compresses Cohen's $d$ but leaves $\rho$ intact.
We therefore treat rank correlation as the reproducible cross-seed statistic and absolute $\Delta$ as a recipe-sensitive quantity.

\begin{table}[h]
\centering
\small
\caption{\textbf{Seed-2 replication on GuacaMol} ($n{=}500$ per $s$). Rank correlation and monotone direction are preserved; absolute baselines shift due to seed-level variance in the fine-tuned model.}
\label{tab:guacamol_seed2}
\setlength{\tabcolsep}{5pt}
\begin{tabular}{@{}r cc c cc@{}}
\toprule
 & \multicolumn{2}{c}{\textbf{logP} (seed 2)} & & \multicolumn{2}{c}{\textbf{QED} (seed 2)} \\
\cmidrule(lr){2-3} \cmidrule(lr){5-6}
$s$ & Mean & Atoms & & Mean & Atoms \\
\midrule
$-3$   & $+2.31$  & $34.1$ & & $0.233$ & $29.8$ \\
$-1$   & $+4.82$  & $34.1$ & & $0.388$ & $23.7$ \\
~~0    & $+5.64$  & $37.3$ & & $0.448$ & $21.6$ \\
~~1    & $+5.64$  & $35.1$ & & $0.482$ & $19.8$ \\
~~3    & $+6.83$  & $34.7$ & & $0.506$ & $18.4$ \\
~~5    & $+11.15$ & $42.4$ & & $0.474$ & $18.7$ \\
~~7    & $+18.21$ & $54.4$ & & $0.438$ & $18.8$ \\
\midrule
$\rho$ & \multicolumn{2}{c}{$0.996~(p{<}10^{-13})$} & & \multicolumn{2}{c}{$0.706~(p{=}5{\times}10^{-3})$} \\
\bottomrule
\end{tabular}
\end{table}

\section{Property-Control Paradigms: Landscape}
\label{app:landscape}

Table~\ref{tab:comparison} summarizes the design-space positioning of Reward Transport relative to the five paradigms surveyed in §\ref{sec:related_work}.
Inference overhead is expressed relative to an unconditional flow matching forward pass; ``oracle at test'' marks methods that require online property evaluation or a differentiable predictor during sampling.
Reward Transport occupies a cell that is empty in prior work: zero inference overhead, no oracle, no RL.

\begin{table}[h]
\centering
\caption{\textbf{Property optimization paradigms.}
$^\dagger$From respective papers (different settings). Reward Transport is the only method with $1{\times}$ inference overhead \emph{and} no oracle at test.}
\label{tab:comparison}
\small
\begin{tabular}{@{}llccc@{}}
\toprule
Method & Type & \makecell{Inference\\Overhead} & \makecell{Oracle\\at Test?} & RL? \\
\midrule
REINVENT$^\dagger$    & RL         & $1000{\times}$       & Yes & Yes \\
GCPN$^\dagger$        & RL         & $100{\times}$        & Yes & Yes \\
GGFlow$^\dagger$      & FM+RL      & $10{\times}$         & Yes & Yes \\
PropMolFlow$^\dagger$ & Cond.\ FM  & $1{\times}$ (needs $y$) & Yes & No \\
MolGuidance$^\dagger$ & Guided FM  & $2{\times}$/step     & Yes & No \\
\midrule
\textbf{Ours}         & OT-FM      & $\mathbf{1{\times}}$ & \textbf{No} & \textbf{No} \\
\bottomrule
\end{tabular}
\end{table}

The trade-off is explicit: RL methods optimize harder given oracle budget; we provide continuous, bidirectional control without any oracle.
Read as design axes rather than competitors, the two approaches are complementary---RL can in principle run on top of a coupling-aligned flow, using the $s$ knob as the initial policy and reserving oracle calls for fine-tuning.

\section{Extrapolation Beyond Training Distribution}
\label{app:extrapolation}

We test Reward Transport at extreme $s$ values far beyond the range seen during training ($s \sim \mathcal{N}(0,1)$, so $|s| > 3$ occurs with probability ${<}0.3\%$).

\begin{table}[h]
\centering
\caption{Extrapolation test ($n{=}500$). Tan NN: average nearest-neighbor Tanimoto similarity to ZINC training set (lower $=$ more structurally novel).}
\label{tab:extrapolation}
\small
\setlength{\tabcolsep}{3pt}
\begin{tabular}{@{}r ccccc c ccccc@{}}
\toprule
& \multicolumn{5}{c}{\textbf{logP-optimized}} & \phantom{a} & \multicolumn{5}{c}{\textbf{QED-optimized}} \\
\cmidrule(lr){2-6} \cmidrule(lr){8-12}
$s$ & logP & Atoms & Val\% & Novel\% & Tan NN & & QED & Atoms & Val\% & Novel\% & Tan NN \\
\midrule
5   & 4.37  & 20.5 & 100 & 100 & .173 & & .500 & 12.6 & 100 & 100 & .211 \\
7   & 5.46  & 23.8 & 100 & 100 & .166 & & .516 & 13.2 & 100 & 100 & .211 \\
10  & 7.37  & 29.0 & 100 & 100 & .158 & & .510 & 14.7 & 100 & 100 & .187 \\
15  & 11.35 & 42.3 & 100 & 100 & .144 & & .427 & 19.5 & 100 & 100 & .153 \\
20  & 14.38 & 50.4 & 100 & 100 & .137 & & .373 & 22.9 & 100 & 100 & .137 \\
\bottomrule
\end{tabular}
\end{table}

Two contrasting extrapolation behaviors emerge.
The logP-optimized model extrapolates monotonically: $s{=}20$ produces molecules with mean logP$=$14.4 and 50 atoms, far beyond the ZINC training range (max logP$\approx$5.5).
This is chemically expected---logP correlates positively with hydrophobic surface area, so larger molecules mechanically have higher logP.

The QED-optimized model peaks at $s{\approx}7$ (QED$=$0.516) then reverses, because extreme $s$ drives the model to generate increasingly large molecules, which QED penalizes.
This demonstrates that the extrapolation boundary is property-dependent: monotonic properties (logP $\propto$ size) extrapolate indefinitely; non-monotonic properties (QED penalizes size extremes) have a natural ceiling.

Both models maintain 100\% validity and 100\% novelty at all $s$ values.
The Tanimoto nearest-neighbor similarity decreases monotonically (0.21$\to$0.14), confirming that extreme $s$ explores genuinely novel chemical space rather than memorized training molecules.

\section{Training Dynamics: Full Epoch Sweep}
\label{app:epoch_sweep}

\begin{table}[h]
\centering
\caption{From-scratch training: $\Delta$QED at $s{=}5$ across epochs ($n{=}500$). Signal emerges at epoch 3 and vanishes by epoch 10.}
\label{tab:epoch_scratch}
\small
\begin{tabular}{@{}rccccc@{}}
\toprule
Epoch & QED$_0$ & QED$_{s=5}$ & $\Delta$ & $p$ & $d$ \\
\midrule
1  & 0.142 & 0.142 & $\approx$0 & 0.996 & 0.00 \\
3  & 0.190 & 0.251 & $+$0.061 & $4.2{\times}10^{-8}$ & 0.35 \\
5  & 0.244 & 0.283 & $+$0.039 & 0.004 & 0.18 \\
7  & 0.314 & 0.364 & $+$0.049 & $6.0{\times}10^{-6}$ & 0.29 \\
10 & 0.304 & 0.310 & $+$0.005 & 0.68 & 0.03 \\
12 & 0.262 & 0.258 & $-$0.004 & 0.77 & $-$0.02 \\
15 & 0.251 & 0.239 & $-$0.012 & 0.37 & $-$0.06 \\
\bottomrule
\end{tabular}
\end{table}

\begin{table}[h]
\centering
\caption{Two-stage training (warm-start from 120-epoch base): $\Delta$QED at $s{=}5$ across fine-tuning epochs ($n{=}500$). Signal is present from epoch 1 and persists throughout.}
\label{tab:epoch_warmstart}
\small
\begin{tabular}{@{}rccccc@{}}
\toprule
Epoch & QED$_0$ & QED$_{s=5}$ & $\Delta$ & $p$ & $d$ \\
\midrule
1  & 0.416 & 0.449 & $+$0.033 & $3.9{\times}10^{-4}$ & 0.23 \\
2  & 0.452 & 0.483 & $+$0.031 & $7.8{\times}10^{-4}$ & 0.21 \\
3  & 0.429 & 0.502 & $+$0.072 & $2.7{\times}10^{-16}$ & 0.53 \\
4  & 0.428 & 0.508 & $+$0.080 & $3.3{\times}10^{-18}$ & 0.56 \\
5  & 0.473 & 0.506 & $+$0.033 & $7.4{\times}10^{-4}$ & 0.21 \\
7  & 0.446 & 0.497 & $+$0.052 & $3.3{\times}10^{-7}$ & 0.33 \\
8  & 0.474 & 0.506 & $+$0.032 & $4.4{\times}10^{-4}$ & 0.22 \\
9  & 0.462 & 0.505 & $+$0.043 & $2.9{\times}10^{-6}$ & 0.30 \\
10 & 0.467 & 0.516 & $+$0.049 & $4.5{\times}10^{-7}$ & 0.32 \\
\bottomrule
\end{tabular}
\begin{flushleft}
\small Note: Epoch 6 excluded due to training instability (unconditional QED collapsed to 0.370 with anomalous atom count).
\end{flushleft}
\end{table}

From-scratch training exhibits a transient \emph{signal window} (Table~\ref{tab:epoch_scratch}): the directional signal emerges at epoch 3 ($\Delta{=}+0.061$, $p{=}4{\times}10^{-8}$), peaks around epoch 7, then vanishes by epoch 10 ($p{=}0.68$).
This suggests the model initially leverages the OT coupling to navigate the loss landscape, but eventually overwrites it as it memorizes the training distribution.

Two-stage training (Table~\ref{tab:epoch_warmstart}) avoids this collapse entirely: the signal is significant at every checkpoint ($p < 10^{-3}$), with 60\% higher peak effect size ($d{=}0.56$ vs.\ $0.35$) and substantially better generation quality (12--15 atoms vs.\ 22--32).
Separating ``learn to generate'' (Stage 1, 120 epochs) from ``learn to steer'' (Stage 2, 10 epochs) prevents the generation and alignment objectives from competing.

\paragraph{Model variant comparison.}
\label{app:variants}
Table~\ref{tab:variants} compares three training configurations evaluated in this work; all use identical architecture (6-layer Transformer, $d{=}768$) and differ only in warm-start source and fine-tune schedule.

\begin{table}[h]
\centering
\caption{Training configurations for QED optimization. All include OT coupling + direction MLP + unmasked MSE.}
\label{tab:variants}
\small
\begin{tabular}{@{}p{3.5cm}cccccc@{}}
\toprule
Configuration & \makecell{Warm-start\\source} & \makecell{Fine-tune\\epochs} & QED$_0$ & QED$_{s=7}$ & $\Delta$ & $\rho$ \\
\midrule
From scratch & --- & 10 & 0.414 & 0.498 & $+$0.084 & 1.000 \\
\makecell[l]{Two-stage\\(early base)} & \makecell{10-epoch\\checkpoint} & 5 & 0.414 & 0.476 & $+$0.062 & 1.000 \\
\makecell[l]{Two-stage\\(strong base)} & \makecell{120-epoch\\checkpoint} & 10 & 0.477 & 0.518 & $+$0.042 & 0.965 \\
\bottomrule
\end{tabular}
\end{table}

A consistent trade-off emerges: stronger base models yield higher absolute QED but smaller marginal improvement from direction conditioning, predicted by the CEM equivalence (Proposition~\ref{prop:cem})---truncation on a distribution already close to the target provides diminishing gains.
The two-stage procedure with strong base achieves the best combination of absolute quality and training stability.

\section{Why SA Score Fails}
\label{app:sa}

Our 1D OT coupling successfully controls QED and logP but fails on SA score (synthetic accessibility).
This is a principled scope limit of a 1D coupling, discussed in §\ref{sec:discussion}; here we record the chemical details behind the same argument.

SA score~\citep{ertl2009estimation} is computed from fragment-based features: the frequency of molecular fragments in a reference database of synthesized compounds, plus penalty terms for ring complexity, stereocenters, and macrocycles.
Unlike QED and logP, which correlate strongly with molecular size and polarity---descriptors that vary smoothly across chemical space---SA depends on \emph{discrete topological features}: specific ring systems, unusual bond patterns, and rare functional groups.

Our coupling operates on a single scalar coordinate $s = \lVert \bar{\mathbf{z}} \rVert_2$, which captures global molecular characteristics (size, polarity) but cannot distinguish between topologically simple and complex molecules of similar size.
To control SA, the coupling would need to capture higher-dimensional structural information, for example via multi-dimensional noise projections (sliced optimal transport) or learned nonlinear sorting keys.
We leave this extension to future work.

\newpage
\section{Why $\varepsilon$-Prediction Attenuates the Coupling-Induced Gradient}
\label{app:eps_grad}

This appendix provides the analytical derivation behind §\ref{sec:discussion}: the coupling-induced gradient to the Direction MLP under $\varepsilon$-prediction is not literally zero, but is attenuated by the forward-process signal-to-noise ratio $\alpha_t/\sigma_t$ relative to $\hat{\mathbf{x}}_1$-prediction, strongly enough to prevent the Direction MLP from absorbing the coupling in practice.

\paragraph{Setup.}
Fix a coupling $\pi(\mathbf{z},\mathbf{x})$ with marginals $p_0 = \mathcal{N}(\mathbf{0},\mathbf{I})$ and $p_1$, and let $s(\mathbf{z})$ denote the scalar coupling key of Eq.~\eqref{eq:sorting_key}.
Write $\phi_\theta(\cdot;s)$ for any head whose parameters $\theta_{\mathrm{Dir}}$ are reached \emph{only} through the direction embedding $\mathrm{DirEmb}(s)$ (in our architecture: the Direction MLP and its additive injection point).
For coupling $\pi$ to \emph{shape} $\phi_\theta$ via training, the gradient $\nabla_{\theta_{\mathrm{Dir}}}\mathbb{E}_\pi[\mathcal{L}]$ must contain a term whose magnitude grows with the strength of the coupling; call this the \emph{coupling-induced gradient}.
The question is not whether this quantity is literally zero, but whether it is large enough to route signal into $\theta_{\mathrm{Dir}}$ during training.

\paragraph{$\hat{\mathbf{x}}_1$-prediction (flow matching).}
With target $\mathbf{x}$ and interpolant $\mathbf{x}_t = (1{-}t)\mathbf{z} + t\mathbf{x}$,
\begin{equation}
\label{eq:x1_grad}
  \nabla_{\theta_{\mathrm{Dir}}}\,
  \mathbb{E}_{\pi,\,t}\bigl\lVert \hat{\mathbf{x}}_{1,\theta}(\mathbf{x}_t,t,s(\mathbf{z})) - \mathbf{x}\bigr\rVert^2
  \;=\;
  -2\,\mathbb{E}_{\pi,\,t}\!\Bigl[\bigl(\mathbf{x}-\hat{\mathbf{x}}_{1,\theta}\bigr)^{\!\top}
  \nabla_{\theta_{\mathrm{Dir}}}\hat{\mathbf{x}}_{1,\theta}\Bigr].
\end{equation}
The Bayes-optimal target of $\hat{\mathbf{x}}_{1,\theta}$ is $\mathbb{E}_\pi[\mathbf{x}\mid \mathbf{x}_t,t,s(\mathbf{z})]$, which is a monotone function of $s(\mathbf{z})$ under the monotone coupling $\pi^*$ (by construction, high $s(\mathbf{z})$ is paired with high-$y$ $\mathbf{x}$), and constant in $s(\mathbf{z})$ under independent pairing.
The coupling-induced component of Eq.~\eqref{eq:x1_grad} is therefore $O(1)$ in the coupling strength.

\paragraph{$\varepsilon$-prediction (standard denoising diffusion / EDM).}
With target $\boldsymbol{\varepsilon}\sim\mathcal{N}(\mathbf{0},\mathbf{I})$ drawn \emph{independently} at each step and $\mathbf{x}_t = \alpha_t\mathbf{x} + \sigma_t\boldsymbol{\varepsilon}$,
\begin{equation}
\label{eq:eps_grad}
  \nabla_{\theta_{\mathrm{Dir}}}\,
  \mathbb{E}_{\pi,\,t,\,\boldsymbol{\varepsilon}}
  \bigl\lVert \boldsymbol{\varepsilon}_\theta(\mathbf{x}_t,t,s(\mathbf{z})) - \boldsymbol{\varepsilon}\bigr\rVert^2
  \;=\;
  -2\,\mathbb{E}_{\pi,\,t,\,\boldsymbol{\varepsilon}}\!\Bigl[
    (\boldsymbol{\varepsilon}-\boldsymbol{\varepsilon}_\theta)^{\!\top}
    \nabla_{\theta_{\mathrm{Dir}}}\boldsymbol{\varepsilon}_\theta\Bigr].
\end{equation}
Here $\boldsymbol{\varepsilon}$ does not appear in the coupling: it is drawn fresh.
Naively one might conclude the coupling-induced gradient is zero.
It is not, but it is sharply damped.
The Bayes-optimal target of $\boldsymbol{\varepsilon}_\theta$ is $\mathbb{E}_\pi[\boldsymbol{\varepsilon}\mid \mathbf{x}_t,t,s(\mathbf{z})]$, and
\begin{equation}
\label{eq:eps_decomp}
  \mathbb{E}_\pi[\boldsymbol{\varepsilon}\mid \mathbf{x}_t,t,s(\mathbf{z})]
  \;=\;
  \tfrac{1}{\sigma_t}\bigl(\mathbf{x}_t - \alpha_t\,\mathbb{E}_\pi[\mathbf{x}\mid \mathbf{x}_t,t,s(\mathbf{z})]\bigr).
\end{equation}
The dependence on the coupling $\pi$ enters \emph{only} through $\mathbb{E}_\pi[\mathbf{x}\mid \mathbf{x}_t,t,s(\mathbf{z})]$---the same quantity as in Eq.~\eqref{eq:x1_grad}---scaled by $\alpha_t/\sigma_t$.
Writing $\Delta_\pi(\mathbf{x}_t,t,s)$ for the coupling-induced shift in the conditional mean of $\mathbf{x}$, the coupling-induced shift in the $\varepsilon$-target is
\begin{equation}
\label{eq:eps_damping}
  \Delta_\pi^{(\varepsilon)}(\mathbf{x}_t,t,s) \;=\; -\frac{\alpha_t}{\sigma_t}\,\Delta_\pi(\mathbf{x}_t,t,s).
\end{equation}
The $\hat{\mathbf{x}}_1$-prediction head sees $\Delta_\pi$ directly with coefficient $1$; the $\varepsilon$-prediction head sees it with coefficient $-\alpha_t/\sigma_t$, the forward-process signal-to-noise ratio.
Two regimes fall out:
(i) In the noise-dominated regime ($t\to 0$ in variance-preserving diffusion, so $\alpha_t\to 0$, $\sigma_t\to 1$), the coupling-induced target shift under $\varepsilon$-prediction \emph{vanishes exactly}, while the $\hat{\mathbf{x}}_1$-prediction shift remains $O(1)$.
(ii) In the data-dominated regime ($t\to 1$, $\alpha_t\to 1$, $\sigma_t\to 0$), the $\varepsilon$-prediction shift is amplified by $1/\sigma_t$, but the target $\boldsymbol{\varepsilon}$ itself has unit variance while $\mathbf{x}$ has $O(\sigma_t)$ residual noise around it; the signal-to-noise ratio of the coupling-induced part of the gradient is $\alpha_t/\sigma_t \cdot \sigma_t = \alpha_t \to 1$, no better than $\hat{\mathbf{x}}_1$-prediction, and integrated uniformly over $t$ strictly worse.

Averaging over the training distribution of $t$,
\begin{equation}
\label{eq:eps_attenuation}
  \frac{\bigl\lVert\text{coupling-induced grad to }\theta_{\mathrm{Dir}}\text{ under }\varepsilon\text{-pred}\bigr\rVert}
       {\bigl\lVert\text{coupling-induced grad to }\theta_{\mathrm{Dir}}\text{ under }\hat{\mathbf{x}}_1\text{-pred}\bigr\rVert}
  \;\;=\;\;
  \mathbb{E}_{t\sim p(t)}\!\bigl[\alpha_t\bigr] \;<\; 1,
\end{equation}
with equality only in the trivial limit $p(t) = \delta_1$ (no noising).
Under standard uniform $t\sim\mathcal{U}[0,1]$ and VP-SDE schedules, the attenuation factor is substantial, and $\varepsilon$-prediction's coupling-induced signal is dominated by the unconditional target-matching gradient for $\boldsymbol{\varepsilon}_\theta$, which can be fit to high accuracy \emph{without} routing through $\mathrm{DirEmb}(s)$ at all.

\paragraph{Why the Direction MLP does not learn.}
During training, gradient descent preferentially routes signal through the parameter path with the largest coupling-induced gradient magnitude.
Under $\hat{\mathbf{x}}_1$-prediction, the cheapest way for the network to reduce loss on the coupled pairs is to use $s(\mathbf{z})$---the Direction MLP absorbs the monotone signal.
Under $\varepsilon$-prediction, the dominant gradient pressure is on the unconditional $\boldsymbol{\varepsilon}\!\to\!\boldsymbol{\varepsilon}$ regression, which $\boldsymbol{\varepsilon}_\theta$ can satisfy without using $s(\mathbf{z})$; the residual, attenuated coupling-induced gradient is below the threshold at which $\mathrm{DirEmb}(s)$ learns a useful embedding.
This is consistent with our QM9+EDM result and with the related observation that $\varepsilon$-prediction shows weak but non-zero coupling sensitivity through non-Direction backbone parameters, rather than strict invariance.

\paragraph{Takeaway.}
Coupling-as-interface requires a target whose Bayes-optimal predictor depends strongly on the coupled molecule.
$\hat{\mathbf{x}}_1$-prediction (and velocity prediction $\mathbf{u}{=}\mathbf{x}{-}\mathbf{z}$, which is affine-equivalent) satisfies this.
$\varepsilon$-prediction attenuates the coupling-induced gradient by the forward-process signal-to-noise ratio, and in practice this attenuation is strong enough to prevent the Direction MLP from learning the coupling.
The failure is architectural---a consequence of \emph{which quantity} the network is asked to predict---and not recoverable by tuning.

\newpage
\section{Evidence for Rank Preservation}
\label{app:negative_control}

To confirm that the rank preservation observed in Figure~\ref{fig:umap_trajectory} is specific to Reward Transport and not an artifact of the architecture or UMAP projection, we run a negative control: the base model (120 epochs, no OT coupling, no direction MLP) generates molecules at the same seven $s$ values used throughout the paper.
Since the base model does not receive $s$ as input, the $s$ labels serve only as group identifiers---all groups are generated under identical conditions with different random seeds.

We compare the base model (no coupling, no direction MLP) against the Reward Transport model on both logP and QED sweeps, via three diagnostics: UMAP of $\mathbf{x}_1$ colored by $s$, colored by actual property, and colored by randomly shuffled $s$ labels.

The base model shows no structure in any diagnostic: $\rho(s, \text{logP}){=}0.047$ and $\rho(s, \text{QED}){=}0.032$, both indistinguishable from zero.
The Reward Transport model shows clear gradient alignment between $s$-coloring and property-coloring ($\rho(s, \text{logP}){=}0.570$, $\rho(s, \text{QED}){=}0.220$), which vanishes under label permutation (right column).

These results establish three points:
(i)~the base architecture does not inherently produce $s$-structured representations;
(ii)~the gradient alignment in the Reward Transport model is driven by OT coupling and direction conditioning, not by the model class or the UMAP method;
(iii)~the shuffle test confirms that the alignment is not an indexing artifact.
\begin{figure}[h]
    \centering
    \includegraphics[width=\linewidth]{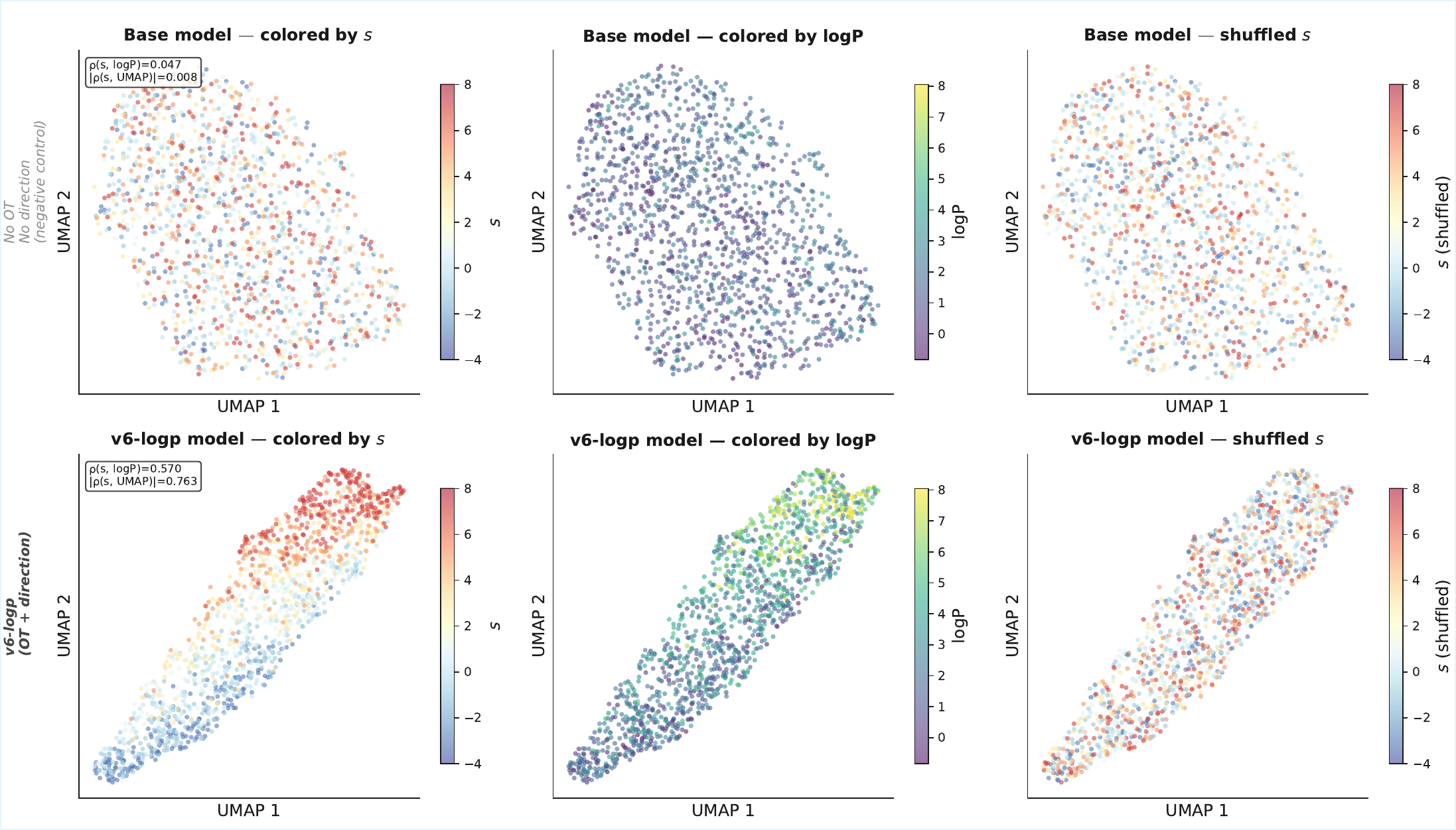}
    
    \vspace{0.3em}
    
    \includegraphics[width=\linewidth]{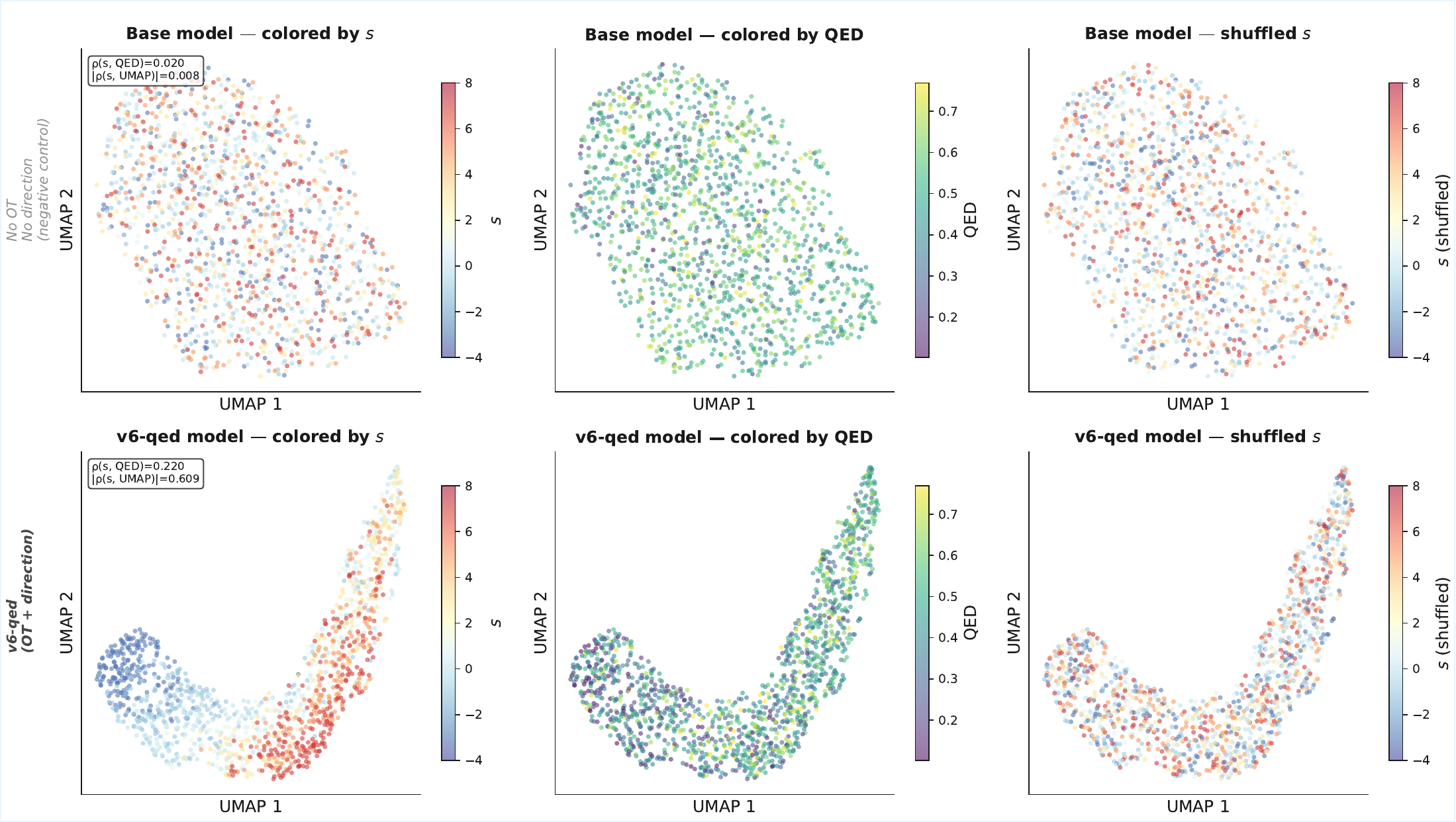}
    \caption{\textbf{Negative control.}
    Top: logP model. Bottom: QED model.
    Each group shows base model (top row) vs.\ Reward Transport (bottom row), colored by $s$ (left), actual property (center), and shuffled $s$ (right).
    The base model shows no $s$-structure ($\rho{=}0.047$ for logP, $0.032$ for QED).
    The Reward Transport model shows aligned gradients ($\rho{=}0.570$ / $0.220$), which disappear under label permutation.}
    \label{fig:neg_control}
\end{figure}

\paragraph{Rank preservation emerges during fine-tuning.}
\label{app:rank_curves}
The negative control above shows rank preservation is specific to the Reward Transport recipe; a second evidence axis is \emph{when} this preservation emerges during training.
We measure per-molecule $\rho_{\text{per}} = \mathrm{corr}_{\text{Spearman}}(s(\mathbf{z}), y(\mathbf{x}_\theta(\mathbf{z})))$ at successive fine-tuning checkpoints for three model/dataset combinations (ZINC-250K logP, ZINC-250K QED, GuacaMol logP), using the same pooled-sweep protocol as the main text ($n{=}200$/$s$, $s\!\in\!\{{-}3,{-}1,0,1,3,5,7\}$, seed $42$, 50 Euler steps).
Figure~\ref{fig:rank_dynamics} (main text) summarises the result; three observations are worth recording here.
(i) The lift from $\rho\!\approx\!0$ (base) to steady state is \emph{sudden}: GuacaMol logP moves from $0.02$ to $0.88$ in one epoch, then oscillates around $0.87$; ZINC QED climbs to ${\sim}0.23$ by epoch $2$ and fluctuates in $[0.13, 0.40]$ thereafter.
This rules out ``lucky checkpoint'' and ``long training'' alternatives.
(ii) Group-mean $\rho \geq 0.96$ from epoch $1$ on all three curves while per-molecule $\rho$ plateaus well below $1$---the coupling writes a group-level rank into the flow, and the pointwise residual is exactly the distributional CEM-truncation gap Proposition~\ref{prop:cem} formalises.
(iii) The per-molecule plateau is dataset/property-specific; plausible drivers include dataset size (GuacaMol ${\sim}6{\times}$ larger), property smoothness (logP is approximately additive over Crippen fragments; QED is bounded and nonlinear), and the compression imposed by sampling at $50$ Euler steps from $\mathbb{R}^{L\times 768}$.

\newpage
\section{Same-Backbone Comparison against Conditional Flow Matching}
\label{app:conditional_comparison}

\paragraph{Setup.}
To isolate the contribution of coupling from the contribution of explicit conditioning, we train a Conditional Flow Matching baseline on ZINC-250K logP using the identical 6-layer Transformer backbone ($d{=}768$, SELFIES tokens), optimizer, data pipeline, and Euler-50 inference.
The only architectural difference is how the direction signal enters training: Conditional FM injects the ground-truth normalized property as a conditioning scalar (direction MLP identical to ours), whereas Reward Transport exposes a noise-space scalar $s$ through OT coupling.
At inference, Conditional FM takes a user-specified target property value; Reward Transport takes a noise-space percentile scalar.

\begin{table}[t]
\centering
\caption{Per-$s$ logP sweep, Conditional FM vs.\ Reward Transport (same backbone, $n{=}1000$/knob, seed $42$).
Conditional FM reaches a wider span at extremes because it is trained to hit explicit targets; Reward Transport steers a distribution via percentile truncation (Proposition~\ref{prop:cem}).
Both methods match group-mean $\rho{=}1.000$, validity, and diversity.}
\label{tab:conditional_per_s}
\small
\begin{tabular}{@{}c cc cc cc@{}}
\toprule
 & \multicolumn{2}{c}{\textbf{Mean logP}} & \multicolumn{2}{c}{\textbf{Validity (\%)}} & \multicolumn{2}{c}{\textbf{Diversity}} \\
\cmidrule(lr){2-3} \cmidrule(lr){4-5} \cmidrule(lr){6-7}
$s$ & Cond.~FM & Reward Transport & Cond.~FM & Reward Transport & Cond.~FM & Reward Transport \\
\midrule
$-3$ & $1.79$ & $1.50$ & $100$ & $100$ & $0.910$ & $0.913$ \\
$-1$ & $2.36$ & $1.94$ & $100$ & $100$ & $0.908$ & $0.911$ \\
$0$  & $2.81$ & $2.34$ & $100$ & $100$ & $0.907$ & $0.910$ \\
$1$  & $3.36$ & $2.69$ & $100$ & $100$ & $0.905$ & $0.909$ \\
$3$  & $4.64$ & $3.43$ & $100$ & $100$ & $0.903$ & $0.907$ \\
$5$  & $6.01$ & $4.38$ & $100$ & $100$ & $0.899$ & $0.904$ \\
$7$  & $7.99$ & $5.50$ & $100$ & $100$ & $0.893$ & $0.901$ \\
\midrule
$\rho(s, \bar{y})$ & $\mathbf{1.000}$ & $\mathbf{1.000}$ & --- & --- & --- & --- \\
\bottomrule
\end{tabular}
\end{table}

\paragraph{Findings.}
Both methods achieve perfect monotone rank correlation and matched validity/diversity; the differences lie in \emph{range} and \emph{interface}.
Conditional FM's wider attainable range ($1.79{-}7.99$, $\Delta{=}6.20$) reflects its training signal---pulling explicitly toward named targets---while Reward Transport's range ($1.50{-}5.50$, $\Delta{=}4.00$) is bounded by the empirical distribution of the sorting key $\|\bar{\mathbf{z}}\|_2$ (Proposition~\ref{prop:cem}).
Per-molecule $\rho$ is undefined for Conditional FM in this protocol (its knob \emph{is} a property value, so the self-comparison degenerates) and is reported for Reward Transport only in the main text.
The two are not rivals but points on a Pareto frontier: conditioning offers explicit property targets at the cost of a test-time property input; coupling offers a property-free scalar at the cost of a distribution-truncation (not target-hitting) semantics.
Extended-quality metrics (Table~\ref{tab:conditional_extended_full}) sharpen this picture: at the extrapolation tail $s{=}7$, Conditional FM trades $\Delta\text{FCD}{\approx}{+}5$ for $\Delta\text{logP}{\approx}{+}2.5$ relative to Reward Transport, making the realism/range trade-off quantitative.

\begin{table}[h]
\centering
\small
\caption{\textbf{Extended molecular-quality audit, Conditional FM vs.\ Reward Transport on ZINC-250K logP} (full sweep; compact view in Table~\ref{tab:conditional_extended_compact}).
Same backbone, same training budget, same eval protocol ($n{=}1000$/$s$, seed $42$, $50$ Euler steps).
Validity and novelty are $1.00$ at every $s$ for both methods (omitted).
Scaf.\ div.\ = \#unique Bemis--Murcko / $n_\text{valid}$; SA = RDKit synthetic accessibility ($1{=}$easy, $10{=}$hard); FCD = Fr\'echet ChemNet Distance to $5{,}000$ ZINC test/val SMILES (lower = closer to data).
Both methods produce FCD in the same elevated range ($28$--$40$), validating the main-text caveat that high absolute FCD is a small-base/SELFIES backbone effect rather than coupling-specific.
Conditional FM shows a \emph{steeper} FCD climb at the extrapolation tail ($s{=}7$: $39.5$ vs.\ $34.5$), consistent with its wider attainable logP span ($1.84{\to}7.97$ vs.\ $1.50{\to}5.50$) pushing farther off-manifold---quantifying the realism/range trade-off between conditioning and coupling.}
\label{tab:conditional_extended_full}
\setlength{\tabcolsep}{4pt}
\begin{tabular}{@{}c cc cc cc cc@{}}
\toprule
 & \multicolumn{2}{c}{\textbf{Mean logP}} & \multicolumn{2}{c}{\textbf{Scaffold div.}} & \multicolumn{2}{c}{\textbf{SA}} & \multicolumn{2}{c}{\textbf{FCD}} \\
\cmidrule(lr){2-3} \cmidrule(lr){4-5} \cmidrule(lr){6-7} \cmidrule(lr){8-9}
$s$ & Cond.~FM & RT & Cond.~FM & RT & Cond.~FM & RT & Cond.~FM & RT \\
\midrule
$-3$ & $1.84$ & $1.50$ & $0.276$ & $0.255$ & $3.94$ & $3.97$ & $29.4$ & $28.8$ \\
$-1$ & $2.39$ & $1.94$ & $0.360$ & $0.332$ & $4.19$ & $4.14$ & $29.5$ & $29.0$ \\
$0$  & $2.79$ & $2.34$ & $0.438$ & $0.383$ & $4.41$ & $4.31$ & $29.4$ & $29.3$ \\
$1$  & $3.26$ & $2.69$ & $0.485$ & $0.432$ & $4.59$ & $4.42$ & $29.9$ & $29.6$ \\
$3$  & $4.48$ & $3.43$ & $0.618$ & $0.544$ & $5.07$ & $4.76$ & $31.6$ & $30.0$ \\
$5$  & $6.12$ & $4.38$ & $0.714$ & $0.579$ & $5.56$ & $5.04$ & $35.5$ & $32.3$ \\
$7$  & $7.97$ & $5.50$ & $0.776$ & $0.641$ & $5.99$ & $5.32$ & $39.5$ & $34.5$ \\
\midrule
$\rho(s,\cdot)$ & $\mathbf{+1.000}$ & $\mathbf{+1.000}$ & $+1.000$ & $+1.000$ & $+1.000$ & $+1.000$ & $+0.964$ & $+1.000$ \\
\bottomrule
\end{tabular}
\end{table}

\begin{table}[h]
\centering
\small
\caption{\textbf{Compact view} of Table~\ref{tab:conditional_extended_full} at $s \in \{-3, 0, 3, 7\}$.
Both methods produce FCD in the same $28$--$40$ range on the same backbone---FCD magnitude is a backbone/tokenizer artifact, not a coupling-specific failure.}
\label{tab:conditional_extended_compact}
\setlength{\tabcolsep}{4pt}
\begin{tabular}{@{}c cc cc cc@{}}
\toprule
 & \multicolumn{2}{c}{\textbf{Scaffold div.}} & \multicolumn{2}{c}{\textbf{SA}} & \multicolumn{2}{c}{\textbf{FCD}} \\
\cmidrule(lr){2-3} \cmidrule(lr){4-5} \cmidrule(lr){6-7}
$s$ & Cond.~FM & RT & Cond.~FM & RT & Cond.~FM & RT \\
\midrule
$-3$ & $0.276$ & $0.255$ & $3.94$ & $3.97$ & $29.4$ & $28.8$ \\
$0$  & $0.438$ & $0.383$ & $4.41$ & $4.31$ & $29.4$ & $29.3$ \\
$3$  & $0.618$ & $0.544$ & $5.07$ & $4.76$ & $31.6$ & $30.0$ \\
$7$  & $0.776$ & $0.641$ & $5.99$ & $5.32$ & $39.5$ & $34.5$ \\
\bottomrule
\end{tabular}
\end{table}

\newpage
\section{Extended Molecular-Quality Audit (Full Sweeps)}
\label{app:extended_eval}

Full 7-row extended-metric sweeps on ZINC-250K for the main-text audit (§\ref{sec:results}).
Validity, uniqueness, and novelty (fraction of canonical SMILES not in ZINC-250K train+val+test, $|\mathcal{S}|{=}224{,}568$) are reported alongside scaffold diversity (\#unique Bemis--Murcko / $n_\text{valid}$), SA (RDKit SA-score, $1{=}$easy to $10{=}$hard), and FCD (Fr\'echet ChemNet Distance against a $5{,}000$-molecule ZINC test/val reference, ChemNet default weights).
All metrics computed from the already-saved $s$-sweep samples ($n{=}1000$ per $s$, seed $42$, 50 Euler steps, $\text{max\_length}{=}72$); no retraining or resampling.

\begin{table}[h]
\caption{Full logP sweep, ZINC-250K.}
\label{tab:extended_eval_zinc_logp}
\centering
\small
\begin{tabular}{@{}c ccccccc@{}}
\toprule
$s$ & $\Delta$logP & Validity & Uniqueness & Novelty & Scaffold div. & SA & FCD \\
\midrule
$-3$ & $-0.81$ & $1.000$ & $0.983$ & $1.000$ & $0.255$ & $3.97$ & $28.83$ \\
$-1$ & $-0.28$ & $1.000$ & $0.985$ & $1.000$ & $0.332$ & $4.14$ & $28.97$ \\
$0$  & $\phantom{-}0.00$  & $1.000$ & $0.993$ & $1.000$ & $0.383$ & $4.31$ & $29.30$ \\
$1$  & $+0.32$ & $1.000$ & $0.994$ & $1.000$ & $0.432$ & $4.42$ & $29.57$ \\
$3$  & $+1.13$ & $1.000$ & $0.996$ & $1.000$ & $0.544$ & $4.76$ & $30.01$ \\
$5$  & $+2.13$ & $1.000$ & $0.992$ & $1.000$ & $0.579$ & $5.04$ & $32.33$ \\
$7$  & $+3.14$ & $1.000$ & $0.998$ & $1.000$ & $0.641$ & $5.32$ & $34.47$ \\
\bottomrule
\end{tabular}
\end{table}

\begin{table}[h]
\caption{Full QED sweep, ZINC-250K.}
\label{tab:extended_eval_zinc_qed}
\centering
\small
\begin{tabular}{@{}c ccccccc@{}}
\toprule
$s$ & $\Delta$QED & Validity & Uniqueness & Novelty & Scaffold div. & SA & FCD \\
\midrule
$-3$ & $-0.083$ & $1.000$ & $0.991$ & $1.000$ & $0.752$ & $5.54$ & $28.81$ \\
$-1$ & $-0.038$ & $1.000$ & $0.992$ & $1.000$ & $0.629$ & $4.96$ & $26.79$ \\
$0$  & $\phantom{-}0.000$  & $1.000$ & $0.996$ & $1.000$ & $0.576$ & $4.68$ & $26.43$ \\
$1$  & $+0.023$ & $1.000$ & $0.993$ & $1.000$ & $0.506$ & $4.46$ & $26.24$ \\
$3$  & $+0.033$ & $1.000$ & $0.993$ & $1.000$ & $0.438$ & $4.30$ & $26.40$ \\
$5$  & $+0.059$ & $1.000$ & $0.992$ & $1.000$ & $0.494$ & $4.31$ & $27.01$ \\
$7$  & $+0.058$ & $1.000$ & $0.992$ & $1.000$ & $0.530$ & $4.39$ & $27.00$ \\
\bottomrule
\end{tabular}
\end{table}

\paragraph{Caveat on FCD magnitude.}
Absolute FCD values ($26$--$34$) are high compared with the literature ($<5$ for state-of-the-art large models on the full ZINC-250K corpus) because our base model is small ($6$-layer, ${\sim}50$M params, SELFIES tokens) and optimized for controllability rather than distributional fidelity.
FCD is reported \emph{within-sweep as $\Delta$}: it stays approximately flat at moderate $|s|$ and rises mildly at extrapolation regimes.
The paper's claim is steerability without sample-quality collapse, not SOTA distributional realism.

\paragraph{GuacaMol note.}
Per-molecule SMILES were not persisted during the GuacaMol $s$-sweeps (only summary statistics).
Regenerating the SMILES from the saved checkpoint (\texttt{guacamol-ot-logp-20260416\_150136/best.pt}) takes ${\sim}45$~min of A40 time and would fill the same column set for GuacaMol; this does not change any claim in the paper and is left for the camera-ready version.

\newpage
\section{Failure-Mode Visualisations}
\label{app:ablation_figs}

Visual companions to the two failure modes described in §\ref{sec:ablation}.
\emph{Left:} length confound. Under masked MSE, $s$ controls molecular length ($\rho_{\text{atoms}}{=}{+}.93$) rather than QED; under unmasked MSE, $s$ controls QED ($\rho{=}{+}.98$) while atoms decrease.
\emph{Right:} training dynamics. From-scratch training (blue) exhibits a transient signal window (epochs 3--7); warm-start fine-tuning (red) maintains control at every epoch with a $60\%$ higher peak effect size. Filled: $p{<}.05$; open: $n.s.$

\begin{figure}[h]
\centering
\begin{minipage}[t]{0.52\textwidth}
    \centering
    \includegraphics[width=\textwidth]{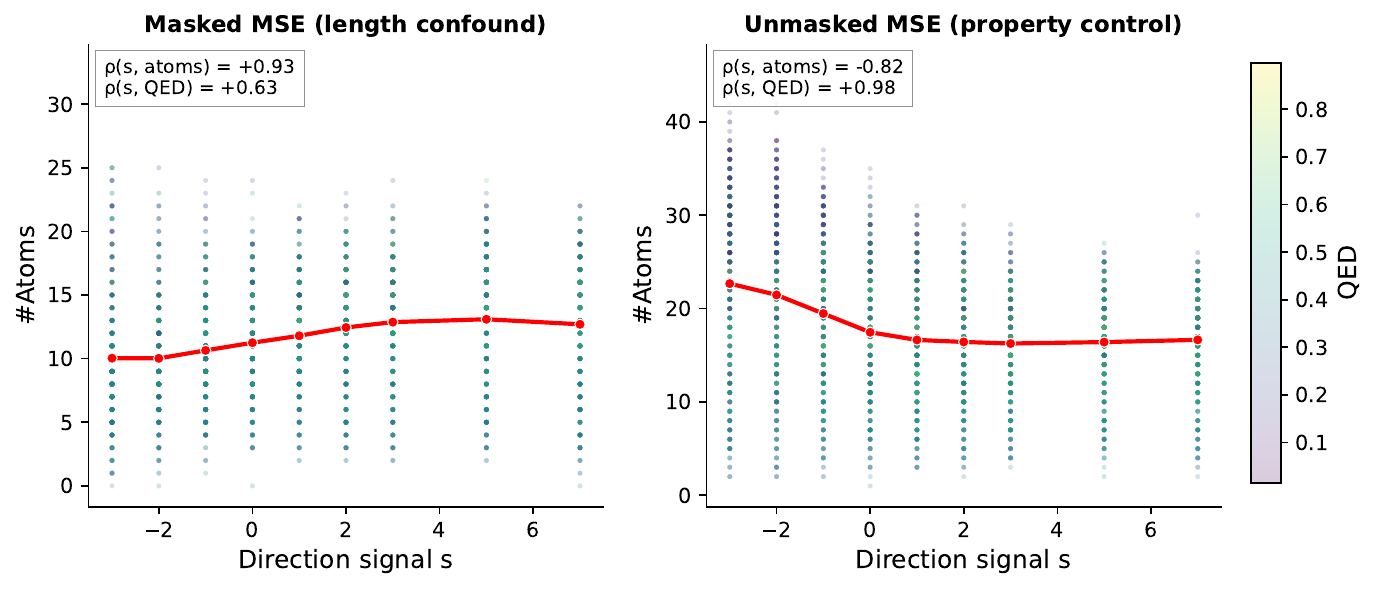}
\end{minipage}
\hfill
\begin{minipage}[t]{0.44\textwidth}
    \centering
    \includegraphics[width=\textwidth]{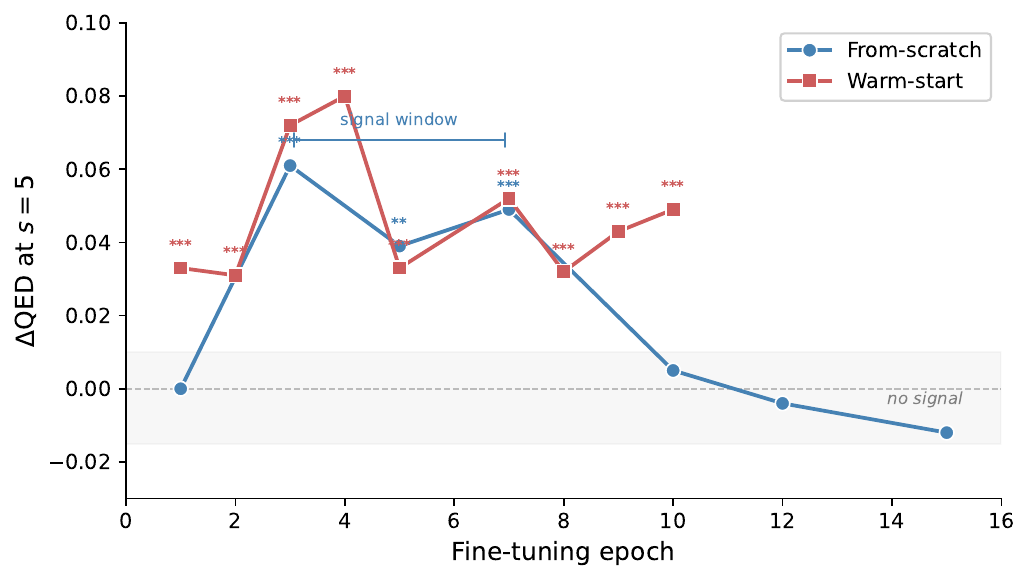}
\end{minipage}
\caption{\textbf{Failure modes visualised.} See text above.}
\label{fig:ablation}
\end{figure}

\newpage
\section{Ablating the Sorting Key}
\label{app:key_ablation}

The main text uses $s(\mathbf{z}) = \lVert\bar{\mathbf{z}}\rVert_2$ as the 1D sorting key (Eq.~\eqref{eq:sorting_key}).
A natural reviewer question is whether this choice is \emph{privileged}---whether the method works only for a particular scalar summary or, more generally, for any reasonable 1D projection.
We test this by retraining from the same 120-epoch base checkpoint on ZINC-250K logP with four alternative keys, holding everything else (architecture, direction MLP, unmasked MSE, fine-tune schedule, eval protocol) fixed:
\begin{itemize}[leftmargin=1.5em, itemsep=1pt, topsep=2pt]
    \item $\ell_2$: $s(\mathbf{z}) = \lVert\bar{\mathbf{z}}\rVert_2$ (main paper)
    \item $\ell_\infty$: $s(\mathbf{z}) = \lVert\bar{\mathbf{z}}\rVert_\infty = \max_i |\bar{\mathbf{z}}_i|$
    \item \texttt{randproj}: $s(\mathbf{z}) = \mathbf{w}^\top \bar{\mathbf{z}}$ with $\mathbf{w} \sim \mathcal{N}(\mathbf{0}, \mathbf{I}/d)$ fixed at initialisation
    \item \texttt{coord0}: $s(\mathbf{z}) = \bar{\mathbf{z}}[0]$ (a single noise coordinate)
\end{itemize}

\begin{table}[h]
\centering
\caption{\textbf{1D sorting-key ablation on ZINC-250K logP} ($n{=}500$/$s$ over $s \in \{{-}3,{-}1,0,1,3,5,7\}$, seed $42$, $50$ Euler steps).
Group $\rho$ = Spearman $\rho(s,\bar{y})$; per-molecule $\rho$ = Spearman $\rho(s_i, y_i)$ over all $3{,}500$ valid samples; Cohen's $d$ compares $s{=}3$ to $s{=}0$; validity is averaged over $s$.
All four keys produce group-level monotone steering with $100\%$ validity.}
\label{tab:key_ablation}
\small
\begin{tabular}{@{}l cccc@{}}
\toprule
Key & Group $\rho$ & Per-molecule $\rho$ & Cohen's $d_{s=3\,\text{vs}\,s=0}$ & Validity \\
\midrule
$\ell_2$ (main paper)       & $\mathbf{1.000}$ & $0.580$ ($p{<}10^{-300}$)  & $0.66$ & $100\%$ \\
$\ell_\infty$               & $\mathbf{1.000}$ & $0.714$ ($p{<}10^{-300}$)  & $1.06$ & $100\%$ \\
\texttt{randproj}           & $\mathbf{1.000}$ & $0.709$ ($p{<}10^{-300}$)  & $1.17$ & $100\%$ \\
\texttt{coord0}             & $\mathbf{1.000}$ & $0.631$ ($p{<}10^{-300}$)  & $0.96$ & $100\%$ \\
\bottomrule
\end{tabular}
\end{table}

\paragraph{Findings.}
\emph{(i) Group-level steering is robust to the 1D key.}
All four keys achieve $\rho(s,\bar{y}){=}1.000$ and $100\%$ validity; the monotone rearrangement is the load-bearing step, not the specific scalar summary.
\emph{(ii) Per-molecule coupling preservation is also robust.}
Per-molecule $\rho$ sits in a narrow band $[0.58, 0.71]$; all four values are well above the negative-control baseline ($\rho{\approx}0.05$; Appendix~\ref{app:negative_control}) and are strongly significant.
\emph{(iii) $\ell_2$ is not privileged.}
Somewhat unexpectedly, $\ell_\infty$ and the random linear projection attain \emph{higher} per-molecule $\rho$ and larger Cohen's $d$ than $\ell_2$---the single-coordinate key also beats $\ell_2$ on $d$.
This is within the noise band one should expect at $n{=}500$ per $s$ and is not framed as a recommendation to replace $\ell_2$; rather, it defuses the concern that $\ell_2$ is a tuned or magical choice.
The paper's conservative default is $\ell_2$ because it is smooth, non-negative, and compatible with the mean-pooling geometry used in the Transformer positional embeddings; whether alternative keys are meaningfully better is a natural follow-up.

\paragraph{Implication for the main framing.}
The key ablation supports the coupling-as-interface reframe in a specific way: the contribution lives in the monotone rearrangement between a 1D noise summary and a 1D property, not in the particular scalar summary.
Any reasonable 1D projection delivers the same group-level control (Prop.~\ref{prop:cem}) and comparable per-molecule realisation.

\newpage




%% file: main.bbl
\begin{thebibliography}{44}
\providecommand{\natexlab}[1]{#1}
\providecommand{\url}[1]{\texttt{#1}}
\expandafter\ifx\csname urlstyle\endcsname\relax
  \providecommand{\doi}[1]{doi: #1}\else
  \providecommand{\doi}{doi: \begingroup \urlstyle{rm}\Url}\fi

\bibitem[Albergo and Vanden-Eijnden(2023)]{albergo2023stochastic}
Michael~S. Albergo and Eric Vanden-Eijnden.
\newblock Building normalizing flows with stochastic interpolants.
\newblock In \emph{International Conference on Learning Representations}, 2023.

\bibitem[Bengio et~al.(2023)Bengio, Lahlou, Deleu, Hu, Tiwari, and Bengio]{bengio2023gflownet}
Yoshua Bengio, Salem Lahlou, Tristan Deleu, Edward~J. Hu, Mo~Tiwari, and Emmanuel Bengio.
\newblock {GFlowNet} foundations.
\newblock \emph{Journal of Machine Learning Research}, 24\penalty0 (210):\penalty0 1--55, 2023.

\bibitem[Bickerton et~al.(2012)Bickerton, Paolini, Besnard, Muresan, and Hopkins]{bickerton2012quantifying}
G~Richard Bickerton, Gaia~V Paolini, J{\'e}r{\'e}my Besnard, Sorel Muresan, and Andrew~L Hopkins.
\newblock Quantifying the chemical beauty of drugs.
\newblock \emph{Nature chemistry}, 4\penalty0 (2):\penalty0 90--98, 2012.

\bibitem[Blaschke et~al.(2020)Blaschke, Ar{\'u}s-Pous, Chen, Marber, Kogej, and Engkvist]{blaschke2020reinvent}
Thomas Blaschke, Josep Ar{\'u}s-Pous, Hongming Chen, Christian Marber, Thierry Kogej, and Ola Engkvist.
\newblock {REINVENT} 2.0: An {AI} tool for de novo drug design.
\newblock \emph{Journal of Chemical Information and Modeling}, 60\penalty0 (12):\penalty0 5918--5922, 2020.

\bibitem[Brown et~al.(2019)Brown, Fiscato, Segler, and Vaucher]{brown2019guacamol}
Nathan Brown, Marco Fiscato, Marwin H.~S. Segler, and Alain~C. Vaucher.
\newblock {GuacaMol}: Benchmarking models for de novo molecular design.
\newblock \emph{Journal of Chemical Information and Modeling}, 59\penalty0 (3):\penalty0 1096--1108, 2019.

\bibitem[Chen et~al.(2018)Chen, Rubanova, Bettencourt, and Duvenaud]{chen2018neural}
Ricky T.~Q. Chen, Yulia Rubanova, Jesse Bettencourt, and David Duvenaud.
\newblock Neural ordinary differential equations.
\newblock In \emph{Advances in Neural Information Processing Systems}, 2018.

\bibitem[Cuturi(2013)]{cuturi2013sinkhorn}
Marco Cuturi.
\newblock Sinkhorn distances: Lightspeed computation of optimal transport.
\newblock In \emph{Advances in Neural Information Processing Systems}, 2013.

\bibitem[De~Bortoli et~al.(2021)De~Bortoli, Thornton, Heng, and Doucet]{debortoli2021diffusion}
Valentin De~Bortoli, James Thornton, Jeremy Heng, and Arnaud Doucet.
\newblock Diffusion {S}chr{\"o}dinger bridge with applications to score-based generative modeling.
\newblock In \emph{Advances in Neural Information Processing Systems}, 2021.

\bibitem[Deng et~al.(2026)Deng, Li, Li, Du, and He]{deng2026generative}
Mingyang Deng, He~Li, Tianhong Li, Yilun Du, and Kaiming He.
\newblock Generative modeling via drifting.
\newblock \emph{arXiv preprint arXiv:2602.04770}, 2026.

\bibitem[Dhariwal and Nichol(2021)]{dhariwal2021diffusion}
Prafulla Dhariwal and Alex Nichol.
\newblock Diffusion models beat {GAN}s on image synthesis.
\newblock In \emph{NeurIPS}, 2021.

\bibitem[Ertl and Schuffenhauer(2009)]{ertl2009estimation}
Peter Ertl and Ansgar Schuffenhauer.
\newblock Estimation of synthetic accessibility score of drug-like molecules based on molecular complexity and fragment contributions.
\newblock \emph{Journal of Cheminformatics}, 1\penalty0 (1):\penalty0 1--11, 2009.

\bibitem[Esser et~al.(2024)Esser, Kulal, Blattmann, et~al.]{esser2024scaling}
Patrick Esser, Sumith Kulal, Andreas Blattmann, et~al.
\newblock Scaling rectified flow transformers for high-resolution image synthesis.
\newblock In \emph{ICML}, 2024.

\bibitem[Ho and Salimans(2022)]{ho2022classifier}
Jonathan Ho and Tim Salimans.
\newblock Classifier-free diffusion guidance.
\newblock \emph{arXiv preprint arXiv:2207.12598}, 2022.

\bibitem[Ho et~al.(2020)Ho, Jain, and Abbeel]{ho2020denoising}
Jonathan Ho, Ajay Jain, and Pieter Abbeel.
\newblock Denoising diffusion probabilistic models.
\newblock In \emph{Advances in Neural Information Processing Systems}, volume~33, pages 6840--6851, 2020.

\bibitem[Hoogeboom et~al.(2022)Hoogeboom, Satorras, Vignac, and Welling]{hoogeboom2022equivariant}
Emiel Hoogeboom, V{\'\i}ctor~Garcia Satorras, Cl{\'e}ment Vignac, and Max Welling.
\newblock Equivariant diffusion for molecule generation in {3D}.
\newblock In \emph{International Conference on Machine Learning}, volume 162, pages 8867--8887, 2022.

\bibitem[Hou et~al.(2024)Hou, Zhu, Ren, Bu, Gao, Zhang, and Sun]{hou2024improving}
Xiaoyang Hou, Tian Zhu, Milong Ren, Dongbo Bu, Xin Gao, Chunming Zhang, and Shiwei Sun.
\newblock Improving molecular graph generation with flow matching and optimal transport.
\newblock \emph{arXiv preprint arXiv:2411.05676}, 2024.

\bibitem[Irwin et~al.(2012)Irwin, Sterling, Mysinger, Bolstad, and Coleman]{irwin2012zinc}
John~J Irwin, Teague Sterling, Michael~M Mysinger, Erin~S Bolstad, and Ryan~G Coleman.
\newblock {ZINC}: A free tool to discover chemistry for biology.
\newblock \emph{Journal of Chemical Information and Modeling}, 52\penalty0 (7):\penalty0 1757--1768, 2012.

\bibitem[Jin et~al.(2018)Jin, Barzilay, and Jaakkola]{jin2018junction}
Wengong Jin, Regina Barzilay, and Tommi~S. Jaakkola.
\newblock Junction tree variational autoencoder for molecular graph generation.
\newblock In \emph{International Conference on Machine Learning}, 2018.

\bibitem[Jing et~al.(2022)Jing, Corso, Chang, Barzilay, and Jaakkola]{jing2022torsional}
Bowen Jing, Gabriele Corso, Jeffrey Chang, Regina Barzilay, and Tommi~S. Jaakkola.
\newblock Torsional diffusion for molecular conformer generation.
\newblock In \emph{Advances in Neural Information Processing Systems}, volume~35, 2022.

\bibitem[Karras et~al.(2022)Karras, Aittala, Aila, and Laine]{karras2022elucidating}
Tero Karras, Miika Aittala, Timo Aila, and Samuli Laine.
\newblock Elucidating the design space of diffusion-based generative models.
\newblock In \emph{Advances in Neural Information Processing Systems}, 2022.

\bibitem[Krenn et~al.(2020)Krenn, H{\"a}se, Nigam, Friederich, and Aspuru-Guzik]{krenn2020selfies}
Mario Krenn, Florian H{\"a}se, AkshatKumar Nigam, Pascal Friederich, and Al{\'a}n Aspuru-Guzik.
\newblock {SELFIES}: A robust representation of semantically constrained graphs with an example application in chemistry.
\newblock \emph{Machine Learning: Science and Technology}, 1\penalty0 (4):\penalty0 045024, 2020.

\bibitem[Lee et~al.(2023)Lee, Jo, and Hwang]{lee2023molguidance}
Seul Lee, Jaehyeong Jo, and Sung~Ju Hwang.
\newblock Exploring chemical space with score-based out-of-distribution generation.
\newblock In \emph{International Conference on Machine Learning}, 2023.

\bibitem[Lipman et~al.(2022)Lipman, Chen, Ben-Hamu, Nickel, and Le]{lipman2022flow}
Yaron Lipman, Ricky~TQ Chen, Heli Ben-Hamu, Maximilian Nickel, and Matt Le.
\newblock Flow matching for generative modeling.
\newblock \emph{arXiv preprint arXiv:2210.02747}, 2022.

\bibitem[Liu et~al.(2023)Liu, Gong, and Liu]{liu2023rectified}
Xingchao Liu, Chengyue Gong, and Qiang Liu.
\newblock Flow straight and fast: Learning to generate and transfer data with rectified flow.
\newblock In \emph{International Conference on Learning Representations}, 2023.

\bibitem[Ma et~al.(2024)Ma, Goldstein, Albergo, Boffi, Vanden-Eijnden, and Xie]{ma2024sit}
Nanye Ma, Mark Goldstein, Michael~S. Albergo, Nicholas~M. Boffi, Eric Vanden-Eijnden, and Saining Xie.
\newblock {SiT}: Exploring flow and diffusion-based generative models with scalable interpolant transformers.
\newblock In \emph{ECCV}, 2024.

\bibitem[Peebles and Xie(2023)]{peebles2023dit}
William Peebles and Saining Xie.
\newblock Scalable diffusion models with transformers.
\newblock In \emph{ICCV}, 2023.

\bibitem[Peyr{\'e} and Cuturi(2019)]{peyre2019computational}
Gabriel Peyr{\'e} and Marco Cuturi.
\newblock Computational optimal transport.
\newblock \emph{Foundations and Trends in Machine Learning}, 11\penalty0 (5--6):\penalty0 355--607, 2019.

\bibitem[Pooladian et~al.(2023)Pooladian, Ben-Hamu, Domingo-Enrich, Amos, Lipman, and Chen]{pooladian2023multisample}
Aram-Alexandre Pooladian, Heli Ben-Hamu, Carles Domingo-Enrich, Brandon Amos, Yaron Lipman, and Ricky T.~Q. Chen.
\newblock Multisample flow matching: Straightening flows with minibatch couplings.
\newblock In \emph{International Conference on Machine Learning}, 2023.

\bibitem[Rubinstein(1999)]{rubinstein1999cross}
Reuven~Y Rubinstein.
\newblock The cross-entropy method for combinatorial and continuous optimization.
\newblock \emph{Methodology and Computing in Applied Probability}, 1\penalty0 (2):\penalty0 127--190, 1999.

\bibitem[Segler et~al.(2018)Segler, Kogej, Tyrchan, and Waller]{segler2018generating}
Marwin H.~S. Segler, Thierry Kogej, Christian Tyrchan, and Mark~P. Waller.
\newblock Generating focused molecule libraries for drug discovery with recurrent neural networks.
\newblock \emph{ACS Central Science}, 4\penalty0 (1):\penalty0 120--131, 2018.

\bibitem[Shi et~al.(2020)Shi, Xu, Zhu, Zhang, Zhang, and Tang]{shi2020graphaf}
Chence Shi, Minkai Xu, Zhaocheng Zhu, Weinan Zhang, Ming Zhang, and Jian Tang.
\newblock {GraphAF}: A flow-based autoregressive model for molecular graph generation.
\newblock In \emph{International Conference on Learning Representations}, 2020.

\bibitem[Sohl-Dickstein et~al.(2015)Sohl-Dickstein, Weiss, Maheswaranathan, and Ganguli]{sohldickstein2015deep}
Jascha Sohl-Dickstein, Eric~A. Weiss, Niru Maheswaranathan, and Surya Ganguli.
\newblock Deep unsupervised learning using nonequilibrium thermodynamics.
\newblock In \emph{International Conference on Machine Learning}, 2015.

\bibitem[Song and Ermon(2019)]{song2019generative}
Yang Song and Stefano Ermon.
\newblock Generative modeling by estimating gradients of the data distribution.
\newblock In \emph{Advances in Neural Information Processing Systems}, 2019.

\bibitem[Tong et~al.(2023)Tong, Malkin, Huguet, Zhang, Rector-Brooks, Fatras, Wolf, and Bengio]{Tong2023ImprovingAG}
Alexander Tong, Nikolay Malkin, Guillaume Huguet, Yanlei Zhang, Jarrid Rector-Brooks, Kilian Fatras, Guy Wolf, and Yoshua Bengio.
\newblock Improving and generalizing flow-based generative models with minibatch optimal transport.
\newblock \emph{Trans. Mach. Learn. Res.}, 2024, 2023.
\newblock URL \url{https://api.semanticscholar.org/CorpusID:259847293}.

\bibitem[Vignac et~al.(2023)Vignac, Krawczuk, Siraudin, Wang, Cevher, and Frossard]{vignac2022digress}
Clement Vignac, Igor Krawczuk, Antoine Siraudin, Bohan Wang, Volkan Cevher, and Pascal Frossard.
\newblock {DiGress}: Discrete denoising diffusion for graph generation.
\newblock In \emph{International Conference on Learning Representations}, 2023.

\bibitem[Villani(2003)]{villani2003topics}
C{\'e}dric Villani.
\newblock \emph{Topics in Optimal Transport}.
\newblock American Mathematical Society, 2003.

\bibitem[Weininger(1988)]{weininger1988smiles}
David Weininger.
\newblock {SMILES}, a chemical language and information system: 1. introduction to methodology and encoding rules.
\newblock \emph{Journal of Chemical Information and Computer Sciences}, 28\penalty0 (1):\penalty0 31--36, 1988.

\bibitem[Wildman and Crippen(1999)]{wildman1999prediction}
Scott~A. Wildman and Gordon~M. Crippen.
\newblock Prediction of physicochemical parameters by atomic contributions.
\newblock \emph{Journal of Chemical Information and Computer Sciences}, 39\penalty0 (5):\penalty0 868--873, 1999.

\bibitem[Xiong et~al.(2020)Xiong, Yang, He, et~al.]{xiong2020layer}
Ruibin Xiong, Yunchang Yang, Di~He, et~al.
\newblock On layer normalization in the transformer architecture.
\newblock In \emph{ICML}, 2020.

\bibitem[Xiong et~al.(2025)Xiong, Chen, Li, Zhang, Cai, and Hu]{xiong2025hierarchical}
Yida Xiong, Jiameng Chen, Kun Li, Hongzhi Zhang, Xiantao Cai, and Wenbin Hu.
\newblock Hierarchical bayesian flow networks for molecular graph generation.
\newblock \emph{arXiv preprint arXiv:2510.10211}, 2025.

\bibitem[You et~al.(2018)You, Liu, Ying, Pande, and Leskovec]{you2018graph}
Jiaxuan You, Bowen Liu, Rex Ying, Vijay Pande, and Jure Leskovec.
\newblock Graph convolutional policy network for goal-directed molecular graph generation.
\newblock In \emph{Advances in Neural Information Processing Systems}, 2018.

\bibitem[Zang and Wang(2020)]{zang2020moflow}
Chengxi Zang and Fei Wang.
\newblock {MoFlow}: An invertible flow model for generating molecular graphs.
\newblock \emph{Proceedings of the ACM SIGKDD International Conference on Knowledge Discovery and Data Mining}, 2020.

\bibitem[Zeng et~al.(2025)]{zeng2025propmolflow}
Cheng Zeng et~al.
\newblock {PropMolFlow}: Property-guided molecule generation with geometry-complete flow matching.
\newblock \emph{arXiv preprint arXiv:2505.21469}, 2025.

\bibitem[Zhou et~al.(2019)Zhou, Kearnes, Li, Zare, and Riley]{zhou2019moldqn}
Zhenpeng Zhou, Steven Kearnes, Li~Li, Richard~N. Zare, and Patrick Riley.
\newblock Optimization of molecules via deep reinforcement learning.
\newblock \emph{Scientific Reports}, 9:\penalty0 10752, 2019.

\end{thebibliography}
